\providecommand{\tabularnewline}{\\}
\theoremstyle{plain}
\newtheorem{thm}{\protect\theoremname}
\theoremstyle{plain}
\newtheorem{lem}[thm]{\protect\lemmaname}
\newenvironment{proof}[1][\protect\proofname]{\par
	\normalfont\topsep6\p@\@plus6\p@\relax
	\trivlist
	\itemindent\parindent
	\item[\hskip\labelsep\scshape #1]\ignorespaces
}{%
	\endtrivlist\@endpefalse
}
\providecommand{\proofname}{Proof}
\providecommand{\lemmaname}{Lemma}
\providecommand{\theoremname}{Theorem}
\begin{document}

\title{On the Curved Geometry of Accelerated Optimization}

\author{Aaron Defazio \\
 Facebook AI Research\\
New York}
\maketitle
\begin{abstract}
In this work we propose a differential geometric motivation for Nesterov's
accelerated gradient method (AGM) for strongly-convex problems. By
considering the optimization procedure as occurring on a Riemannian
manifold with a natural structure, The AGM method can be seen as the
proximal point method applied in this curved space. This viewpoint
can also be extended to the continuous time case, where the accelerated
gradient method arises from the natural block-implicit Euler discretization
of an ODE on the manifold. We provide an analysis of the convergence
rate of this ODE for quadratic objectives.
\end{abstract}

\section{Introduction\vspace{-0.5em}
}

The core algorithms of convex optimization are gradient descent (GD)
and the accelerated gradient method (AGM). These methods are rarely
used directly, more often they occur as the building blocks for distributed,
composite, or non-convex optimization. In order to build upon these
components, it is helpful to understand not just \emph{how} they work,
but \emph{why.} The gradient method is well understood in this sense.
It is commonly viewed as following a direction of steepest descent
or as minimizing a quadratic upper bound. These interpretations provide
a motivation for the method as well as suggesting a potential convergence
proof strategy. 

The accelerated gradient method in contrast has an identity crisis.
Its equational form is remarkably malleable, allowing for many different
ways of writing the same updates. We list a number of these forms
in Table \ref{tab:nes}. Nesterov's original motivation for the AGM
method used the concept of estimate sequences. Unfortunately, estimate
sequences do not necessarily yield the simplest accelerated methods
when generalized, such as for the composite case (\citealt{fista,nes-composite}),
and they have not been successfully applied in the important finite-sum
(variance reduced) optimization setting.

Because of the complexity of estimate sequences, the AGM method is
commonly motivated as a form of momentum. This view is flawed as a
way of introducing the AGM method from first principles, as the most
natural way of using momentum yields the heavy ball method instead:
\[
x^{k+1}=x^{k}-\gamma\nabla f\left(x^{k}\right)+\beta\left(x^{k}-x^{k-1}\right),
\]
which arises from discretizing the physics of a particle in a potential
well with additional friction. The heavy-ball method does not achieve
an accelerated convergence rate on general convex problems, suggesting
that momentum, \emph{per se}, is not the reason for acceleration.
Another contemporary view is the linear-coupling interpretation of
\citet{AllenOrecchia2017}, which views the AGM method as an interpolation
between gradient descent and mirror descent. We take a more geometric
view in our interpretation.

In this work we motivate the AGM by introducing it as an application
of the proximal-point method:
\[
x^{k}=\arg\min_{x}\left\{ f(x)+\frac{\eta}{2}\left\Vert x-x^{k-1}\right\Vert ^{2}\right\} .
\]

The proximal point (PP) method is perhaps as foundational as the gradient
descent method, although it sees even less direct use as each step
requires solving a regularized subproblem, in contrast to the closed
form steps for GD and AGM. The PP method gains remarkable convergence
rate properties in exchange for the computational difficulty, including
convergence for any positive step-size. 

We construct the AGM by applying a dual form of the proximal point
method in a curved space. Each step follows a geodesic on a manifold
in a sense we make precise in Section \ref{sec:bregman-prox}. We
use the term curved with respect to a coordinate system, rather than
a coordinate free notion of curvature such as the Riemannian curvature.
We first give a brief introduction to the concepts from differential
geometry necessary to understand our motivation. The equational form
that our argument yields is much closer to those that have been successfully
applied in practice, particularly for the minimization of finite sums
\citep{lan2017,spdc}. 

\section{Connections\vspace{-0.5em}
}

A\emph{n (affine) connection} is a type of structure on a manifold
that can be used to define and compute geodesics. Geodesics in this
sense represent curves of zero acceleration. These geodesics are more
general concepts than Riemannian geodesics induced by the Riemannian
connection, not necessarily representing the shortest path in any
metric. Indeed, we will define multiple connections on the same manifold
that lead to completely different geodesics. 

Given a $n$ dimensional coordinate system, a connection is defined
by $n^{3}$ numbers at every point $x$ on the manifold, called the
connection coefficients (or Christoffel symbols) $\varGamma_{ij}^{k}(x)$.
A geodesic is a path $\gamma:[0,1]\rightarrow\mathcal{M}$ (in our
case $\mathcal{M}=\mathbb{R}^{n}$) between two points $x$ and $y$
can then be computed as the unique solution $\gamma(t)=x(t)$ to the
system of ordinary differential equations \citep[Page 58, Eq 4.11]{riemannian-book}:
\[
\frac{d^{2}\gamma^{i}}{dt^{2}}\doteq\frac{d^{2}x^{i}}{dt^{2}}+\sum_{j,k}\varGamma_{jk}^{i}(x)\frac{dx^{j}}{dt}\frac{dx^{k}}{dt}=0,
\]
with boundary conditions $x(0)=x$ and $x(1)=y.$ Here $x^{i}$ denotes
the $i\text{th}$ component of $x$ expressed in the same coordinate
system as the connection.

\section{Divergences induce Hessian manifold structure\vspace{-0.5em}
}

\label{sec:manifold-structure}Let $\phi$ be a smooth strongly convex
function defined on $\mathbb{R}^{n}$. The Bregman divergence generated
by $\phi$: 
\[
B_{\phi}(x,y)=\phi(x)-\phi(y)-\bigl\langle\nabla\phi(y),x-y\bigr\rangle,
\]
 and its derivatives can be used to define a remarkable amount of
structure on the domain of $\phi$. In particular, we can define a
\emph{Riemannian manifold, together with two dually flat connections
with biorthogonal coordinate} systems \citep{infogeombook,hessianstructuresbook}.
This structure is also known as a Hessian manifold. Topologically
it is $\mathcal{M}=\mathbb{R}^{n}$ with the following additional
geometric structures.

\subsection*{Riemannian structure\vspace{-0.5em}
}

Riemannian manifolds have the additional structure of a \emph{metric}
\emph{tensor} (a generalized dot-product), defined on their tangent
spaces. We denote the vector space of tangent vectors at a point $x$
as $T_{x}\mathcal{M}$. If we express the tangent vectors with respect
to the Euclidean basis, the metric at a point $x$ is a quadratic
form with the Hessian matrix $H(x)=\nabla_{x}^{2}B(x,y)=\nabla^{2}\phi(x)$
of $\phi$ at $x$:
\[
g_{x}(u,v)=u^{T}H(x)v.
\]

\subsection*{Biorthogonal coordinate systems\vspace{-0.5em}
}

Central to the notion of a manifold is the invariance to the choice
of coordinate system. We can express a point on the manifold as well
as a point in the tangent space using any coordinate system that is
most convenient. Of course, when we wish to perform calculations on
the manifold we must be careful to express all quantities in that
coordinate system. Euclidean coordinates $e_{i}$ are the most natural
on our Hessian manifold, however there is another coordinate system
which is naturally dual to $e_{i},$ and ties the manifold structure
directly to duality theory in optimization.

Recall that for a convex function $\phi$ we may define the convex
conjugate $\phi^{*}(y)=\max_{x}\left\{ \left\langle x,y\right\rangle -\phi(x)\right\} .$
The dual coordinate system we define simply identifies each point
$x$, when expressed in Euclidean (``primal'') coordinates, with
the vector of ``dual'' coordinates:
\[
y=\nabla\phi(x).
\]
Our assumptions of smoothness and strong convexity imply this is a
one-to-one mapping, with inverse given by $x=\nabla\phi^{*}(y).$
The remarkable fact that the gradient of the conjugate is the inverse
of the gradient is a key building block of the theory in this paper. 

The notion of \emph{biorthogonality} refers to natural tangent space
coordinates of these two systems. A tangent vector $v$ at a point
$x$ can be converted to a vector $u$ of dual (tangent space) coordinates
using matrix multiplication with the Hessian \citep{hessianstructuresbook}:
\begin{equation}
u=H(x)v,\label{eq:coc-tangent}
\end{equation}
Given the definition of the metric above, it is easy to see that if
we have two vectors $v_{1}$ and $v_{2}$, we may express $v_{2}$
in dual coordinates $u_{2}$ so that the metric tensor takes the simple
form:
\[
g_{x}(v_{1},v_{2})=v_{1}^{T}H(x)v_{2}=v_{1}^{T}H(x)\left(H(x)^{-1}u_{2}\right)=v_{1}^{T}u_{2},
\]
which is the \emph{biorthogonal} relation between the two tangent
space coordinate systems.

\subsection*{Dual Flat Connections\vspace{-0.5em}
}

\begin{table}
\caption{\label{tab:nes}\textsf{\small{}Equivalent forms of Nesterov's method
for $\mu$-strongly convex, $L$-smooth $f$. Proofs of the stated
relations are available in the appendix.}}

{\small{}}%
\begin{tabular*}{1\columnwidth}{@{\extracolsep{\fill}}|>{\centering}m{2.5cm}|c|c|}
\hline 
{\small{}Form Name} & {\small{}Algorithm} & {\small{}Relations}\tabularnewline
\hline 
\hline 
{\small{}\citet{nesterov2013introductory}}\\
{\small{}form I} & {\small{}$\begin{aligned}y^{k} & =\frac{\alpha\gamma v^{k}+\gamma x^{k}}{\alpha\mu+\gamma}\\
x^{k+1} & =y^{k}-\frac{1}{L}\nabla f(y^{k}),\\
v^{k+1} & =\left(1-\alpha\right)v^{k}+\frac{\alpha\mu}{\gamma}y^{k}-\frac{\alpha}{\gamma}\nabla f(y^{k})
\end{aligned}
$} & {\small{}$\begin{aligned}\alpha_{\text{Nes}} & =\sqrt{\mu/L}\\
\gamma_{\text{Nes}} & =\mu.
\end{aligned}
$}\tabularnewline
\hline 
{\small{}\citet{nesterov2013introductory}}\\
{\small{}form II} & {\small{}$\begin{aligned}x^{k+1} & =y^{k}-\frac{1}{L}\nabla f(y^{k}),\\
y^{k+1} & =x^{k+1}+\beta\left(x^{k+1}-x^{k}\right)
\end{aligned}
$} & {\small{}$\beta_{\text{Nes}}=\frac{\sqrt{L}-\sqrt{\mu}}{\sqrt{L}+\sqrt{\mu}}$}\tabularnewline
\hline 
{\small{}\citet{sutskever2013}} & {\small{}$\begin{aligned}p^{k+1} & =\beta p^{k}-\frac{1}{L}\nabla f\left(x^{k}+\beta p^{k}\right),\\
x^{k+1} & =x^{k}+p^{k+1}
\end{aligned}
$} & {\small{}$\begin{aligned}p_{\text{Sut}}^{k+1} & =x_{\text{Nes}}^{k+1}-x_{\text{Nes}}^{k},\\
y_{\text{Nes}}^{k} & =x_{\text{Sut}}^{k}+\beta p_{\text{Sut}}^{k}.
\end{aligned}
$}\tabularnewline
\hline 
{\small{}Modern Momentum}\tablefootnote{PyTorch \& Tensorflow (for instance) implement this form. Evaluating
the gradient and function at the current iterate $x^{k}$ instead
of a shifted point makes it more consistent with gradient descent
when wrapped in a generic optimization layer.} & {\small{}$\begin{aligned}p^{k+1} & =\beta p^{k}+\nabla f(x^{k}),\\
x^{k+1} & =x^{k}-\frac{1}{L}\left(\nabla f(x^{k})+\beta p^{k+1}\right).
\end{aligned}
$} & {\small{}$\begin{aligned}x_{\text{mod}}^{k} & =x_{\text{Sut}}^{k}+\beta p_{\text{Sut}}^{k}=y_{\text{Nes}}^{k},\\
p_{\text{mod}}^{k} & =-Lp_{\text{Sut}}^{k}.
\end{aligned}
$}\tabularnewline
\hline 
{\small{}\citet{auslender2006}} & {\small{}$\begin{aligned}y^{k} & =(1-\theta)\hat{x}^{k}+\theta z^{k},\\
z^{k+1} & =z^{k}-\frac{\gamma}{\theta}\nabla f(y^{k}),\\
\hat{x}^{k} & =(1-\theta)\hat{x}^{k}+\theta z^{k+1}.
\end{aligned}
$} & {\small{}$\begin{aligned}\theta_{\text{AT}} & =1-\beta_{\text{Nes}},\\
\hat{x}_{\text{AT}}^{k} & =x_{\text{Nes}}^{k},\\
y_{\text{AT}}^{k} & =y_{\text{Nes}}^{k}=x_{\text{mod}}^{k},\\
\gamma_{\text{AT}} & =1/L.
\end{aligned}
$}\tabularnewline
\hline 
{\small{}\citet{lan2017}} & {\small{}$\begin{aligned}\tilde{x}^{k} & =\alpha(x^{k-1}-x^{k-2})+x^{k-1},\\
\underline{x}^{k} & =\frac{\tilde{x}^{k}+\tau\underline{x}^{k-1}}{1+\tau},\\
g^{k} & =\nabla f(\underline{x}^{k}),\\
x^{k} & =x^{k-1}-\frac{1}{\eta}g^{k}.
\end{aligned}
$} & {\small{}$\begin{aligned}x_{\text{Lan}}^{k} & =z_{\text{AT}}^{k},\\
\underline{x}_{\text{Lan}}^{k} & =y_{\text{AT}}^{k},\\
\eta_{\text{Lan}} & =\frac{\gamma_{\text{AT}}}{\theta_{\text{AT}}},\\
\tau_{\text{Lan}} & =\frac{1-\theta_{\text{AT}}}{\theta_{\text{AT}}},\\
\alpha_{\text{Lan}} & =1-\theta_{\text{AT}}.
\end{aligned}
$}\tabularnewline
\hline 
\end{tabular*}\vspace{-0.5em}
\end{table}
\label{subsec:flat}There is an obvious connection $\varGamma^{(E)}$
we can apply to the Hessian manifold, the Euclidean connection that
trivially identifies straight lines in $\mathbb{R}^{n}$ as geodesics.
Normally when we perform gradient descent in $\mathbb{R}^{n}$ we
are implicitly following a geodesic of this connection. The connection
coefficients $\varGamma_{ij}^{(E)k}$ are all zero when this connection
is expressed in Euclidean coordinates. A connection that has $\varGamma_{ij}^{k}=0$
with respect to some coordinate system is a \emph{flat} connection. 

The Hessian manifold admits another flat connection, which we will
call the dual connection, as it corresponds to straight lines in the
dual coordinate system established above. In particular each dual
geodesic can be expressed in primal coordinates $\gamma(t)$ as a
solution to the equation:
\[
\nabla\phi\left(\gamma(t)\right)=at+b,
\]
for vectors $a$, $b$ representing the initial velocity and point
respectively (both represented in dual coordinates) that depend on
the boundary conditions. This is quite easy to solve using the relation
$\nabla\phi^{-1}=\nabla\phi^{*}$ discussed above. For instance, a
geodesic $\gamma:[0,1]\rightarrow\mathcal{M}$ between two arbitrary
points $x$ and $y$ under the dual connection could be computed explicitly
in Euclidean coordinates as:
\[
\gamma(t)=\nabla\phi^{*}\left(t\nabla\phi(y)+\left(1-t\right)\nabla\phi(x)\right).
\]

If we instead know the initial velocity we can find the endpoint with:
\begin{equation}
y=\nabla\phi^{*}\left(\nabla\phi(x)+H(x^{k})v\right).\label{eq:exp_geo}
\end{equation}
The flatness of the dual connection $\varGamma^{(D)}$ is crucial
to its computability in practice. If we instead try to compute the
geodesic in Euclidean coordinates using the geodesic ODE, we have
to work with the connection coefficients which involve third derivatives
of $\phi$ (taking the form of double those of the Riemannian connection
$\varGamma^{(R)}$):
\[
\varGamma_{ij}^{(D)k}(x)=2\varGamma_{ij}^{(R)k}=\left[H(x)^{-1}\left(\nabla H(x)\right)_{i}\right]_{kj},
\]
The Riemannian connection's geodesics are similarly difficult to compute
directly from the ODE (they also can't generally be expressed in a
simpler form).
\begin{figure}
\begin{centering}
\hfill{}\includegraphics[width=0.48\columnwidth]{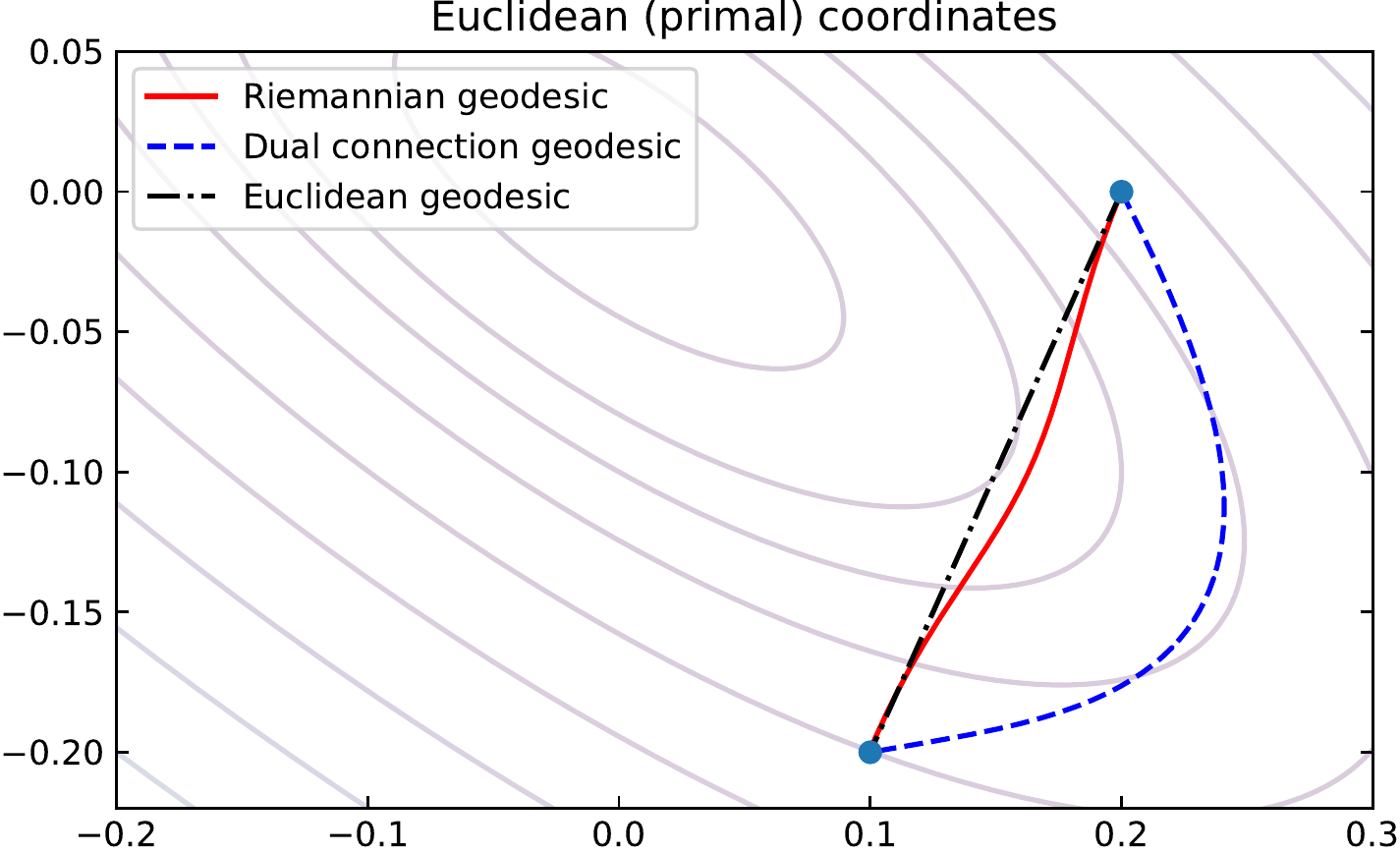}\hfill{}\includegraphics[width=0.48\columnwidth]{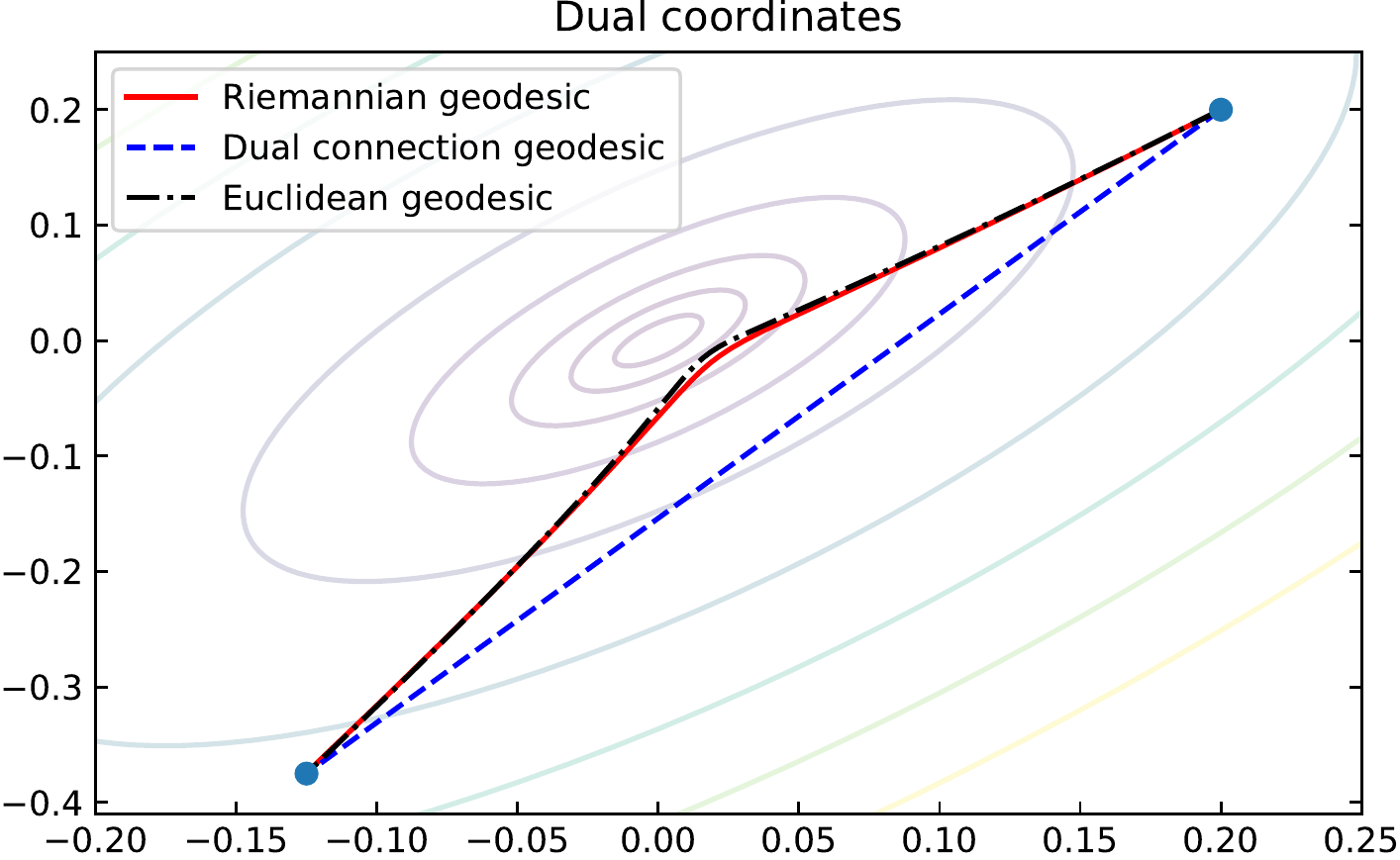}\hfill{}
\par\end{centering}
\caption{{\small{}Illustrative geodesics for $f(x)=\frac{1}{4}\left\Vert Ax\right\Vert ^{4}$
, with $A=[2,\,1;\,1,\,3].$ Viewing them from both coordinate systems
highlights the duality. Contour lines are for $f$ and $f^{*}$ respectively.}}
\end{figure}

\section{Bregman proximal operators follow geodesics\vspace{-0.5em}
}

\label{sec:bregman-prox}Bregman divergences arise in optimization
primarily through their use in proximal steps. A Bregman proximal
operation balances finding a minimizer of a given function $f$ with
maintaining proximity to a given point $y$, measured using a Bregman
divergence instead of a distance metric:
\begin{equation}
x^{k}=\arg\min_{x}\left\{ f(x)+\rho B_{\phi}(x,x^{k-1})\right\} .\label{eq:bregman-basic}
\end{equation}
A core application of this would be the mirror descent step \citep{nem-yudin,beck-teb-mda},
where the operation is applied to a linearized version of $f$ instead
of $f$ directly:
\[
x^{k}=\arg\min_{x}\left\{ \left\langle x,\nabla f(x^{k-1})\right\rangle +\rho B_{\phi}(x,x^{k-1})\right\} .
\]
Bregman proximal operations can be interpreted as geodesic steps with
respect to the dual connection. The key idea is that given an input
point $x^{k-1}$, they output a point $x$ such that the velocity
of the connecting geodesic is equal to $-\nabla\frac{1}{\rho}f(x)$
at $x$. This velocity is measured in the flat coordinate system of
the connection, the dual coordinates. To see why, consider a geodesic
$\gamma(t)=(1-t)\nabla\phi(x^{k-1})+t\nabla\phi(x^{k}).$ Here $x^{k-1}$
and $x^{k}$ are in primal coordinates and $\gamma(t)$ is in dual
coordinates. The velocity is $\frac{d}{dt}\gamma(t)=\nabla\phi(x^{k})-\nabla\phi(x^{k-1}).$
Contrast to the optimality condition of the Bregman prox (Equation
\ref{eq:bregman-basic}):
\[
-\frac{1}{\rho}\nabla f(x^{k})=\nabla\phi(x^{k})-\nabla\phi(x^{k-1}).
\]
For instance, when using the Euclidean penalty the step is:
\[
{\textstyle x^{k}=\arg\min_{x}\bigl\{ f(x)+\frac{\rho}{2}\left\Vert x-x^{k-1}\right\Vert ^{2}\bigr\}.}
\]
The final velocity is just $x^{k}-x^{k-1}$, and so $x^{k}-x^{k-1}=-\frac{1}{\rho}\nabla f(x^{k})$,
which is the solution of the proximal operation.\vspace{-0.5em}

\section{Primal-Dual form of the proximal point method\vspace{-0.5em}
}

\label{sec:dual-form}The proximal point method is the building block
from which we will construct the accelerated gradient method. Consider
the basic form of the proximal point method applied to a strongly
convex function $f$. At each step, the iterate $x^{k}$ is constructed
from $x^{k-1}$ by solving the proximal operation subproblem given
an inverse step size parameter $\eta$:
\begin{equation}
x^{k}=\arg\min_{x}\left\{ f(x)+\frac{\eta}{2}\left\Vert x-x^{k-1}\right\Vert ^{2}\right\} .\label{eq:prox-step}
\end{equation}
This step can be considered an implicit form of the gradient step,
where the gradient is evaluated at the end-point of the step instead
of the beginning:
\[
x^{k}=x^{k-1}-\frac{1}{\eta}\nabla f(x^{k}),
\]

which is just the optimality condition of the subproblem in Equation
\ref{eq:prox-step}, found by taking the derivative $\nabla f(x)+\eta x-\eta x^{k-1}$
to be zero. A remarkable property of the proximal operation becomes
apparent when we rearrange this formula, namely that the solution
to the operation is not a single point but a \emph{primal-dual pair},
whose weighted sum is equal to the input point:
\[
x^{k}+\frac{1}{\eta}\nabla f(x^{k})=x^{k-1}.
\]

If we define $g^{k}=\nabla f(x^{k})$, the primal-dual pair obeys
a duality relation: $g^{k}=\nabla f(x^{k})$ and $x^{k}=\nabla f^{*}(g^{k})$,
which allows us to interchange primal and dual quantities freely.
Indeed we may write the condition in a dual form as:
\begin{equation}
\nabla f^{*}\left(g^{k}\right)+\frac{1}{\eta}g^{k}=x^{k-1},\label{eq:basic_dual_condition}
\end{equation}
which is the optimality condition for the proximal operation:
\[
g^{k}=\arg\min_{g}\left\{ f^{*}(g)+\frac{1}{2\eta}\left\Vert g-\eta x^{k-1}\right\Vert ^{2}\right\} .
\]

Our goal in this section is to express the proximal point method in
terms of a dual step, and while this equation involves the dual function
$f^{*}$, it is not a \emph{step} in the sense that $g^{k}$ is formed
by a proximal operation from $g^{k-1}.$ 

We can manipulate this formula further to get an update of the form
we want, by simply adding and subtracting $g^{k-1}$ from \ref{eq:basic_dual_condition}:
\[
\nabla f^{*}\left(g^{k}\right)+\frac{1}{\eta}g^{k}=\frac{1}{\eta}g^{k-1}+\left(x^{k-1}-\frac{1}{\eta}g^{k-1}\right),
\]
Which gives the updates:
\begin{align*}
g^{k} & =\arg\min_{g}\left\{ f^{*}(g)-\left\langle g,\,x^{k-1}-\frac{1}{\eta}g^{k-1}\right\rangle +\frac{1}{2\eta}\left\Vert g-g^{k-1}\right\Vert ^{2}\right\} ,\\
x^{k} & =x^{k-1}-\frac{1}{\eta}g^{k}.
\end{align*}
We call this the primal-dual form of the proximal point method.

\section{Acceleration as a change of geometry\vspace{-0.5em}
}

\label{sec:prox-change-geom}The proximal point method is rarely used
in practice due to the difficulty of computing the solution to the
proximal subproblem. It is natural then to consider modifications
of the subproblem to make it more tractable. The subproblem becomes
particularly simple if we replace the proximal operation with a Bregman
proximal operation with respect to $f^{*}$, 
\[
g^{k}=\arg\min_{g}\left\{ f^{*}(g)-\left\langle g,\,x^{k-1}-\frac{1}{\eta}g^{k-1}\right\rangle +\tau B_{f^{*}}(g,g^{k-1})\right\} .
\]
We have additionally changed the penalty parameter to a new constant
$\tau$, which is necessary as the change to the Bregman divergence
changes the scaling of distances. We discuss this further below. 

Recall from Section \ref{sec:bregman-prox} that Bregman proximal
operations follow geodesics. The key idea is that we are now following
a geodesic in the dual connection of $\phi=f^{*}$, using the notation
of Section \ref{subsec:flat}, which is a \emph{straight-line in the
primal coordinates} of $f$ due to the flatness of the connection
(Section \ref{subsec:flat}). Due to the flatness property, a simple
closed-form solution can be derived by equating the derivative to
0:
\begin{gather*}
\nabla f^{*}(g^{k})-\left[x^{k-1}-\frac{1}{\eta}g^{k-1}\right]+\tau\nabla f^{*}(g^{k})-\tau\nabla f^{*}(g^{k-1})=0,\\
\text{therefore }g^{k}=\nabla f\left(\left(1+\tau\right)^{-1}\left[x^{k-1}-\frac{1}{\eta}g^{k-1}+\tau\nabla f^{*}(g^{k-1})\right]\right).
\end{gather*}
This formula gives $g^{k}$ in terms of the derivative of known quantities,
as $\nabla f^{*}(g^{k-1})$ is known from the previous step as the
point at which we evaluated the derivative at. We will denote this
argument to the derivative operation $y$, so that $g^{k}=\nabla f(y^{k})$.
It no longer holds that $g^{k}=\nabla f(x^{k})$ after the change
of divergence. Using this relation, $y$ can be computed each step
via the update:
\[
y^{k}=\frac{x^{k-1}-\frac{1}{\eta}g^{k-1}+\tau y^{k-1}}{1+\tau}.
\]

In order to match the accelerated gradient method exactly we need
some additional flexibility in the step size used in the $y^{k}$
update. To this end we introduce an additional constant $\alpha$
in front of $g^{k-1}$, which is 1 for the proximal point variant.
The full method is as follows:

\noindent\fbox{\begin{minipage}[t]{1\columnwidth - 2\fboxsep - 2\fboxrule}%

\paragraph*{Bregman form of the accelerated gradient method}

\begin{align}
y^{k} & =\frac{x^{k-1}-\frac{\alpha}{\eta}g^{k-1}+\tau y^{k-1}}{1+\tau},\nonumber \\
g^{k} & =\nabla f(y^{k}),\nonumber \\
x^{k} & =x^{k-1}-\frac{1}{\eta}g^{k}.\label{eq:bregman-full-form}
\end{align}
\end{minipage}}

This is very close to the equational form of Nesterov's method explored
by \citet{lan2017}, with the change that they assume an explicit
regularizer is used, whereas we assume strong convexity of $f$. Indeed
we have chosen our notation so that the constants match. This form
is algebraically equivalent to other known forms of the accelerated
gradient method for appropriate choice of constants. Table \ref{tab:nes}
shows the direct relation between the many known ways of writing the
accelerated gradient method in the strongly-convex case (Proofs of
these relations are in the Appendix). When $f$ is $\mu$-strongly
convex and $L$-smooth, existing theory implies an accelerated geometric
convergence rate of at least $1-\sqrt{\frac{\mu}{L}}$ for the parameter
settings \citep{nesterov2013introductory}:
\[
{\textstyle \eta=\sqrt{\mu L},\qquad\tau=\frac{L}{\eta},\qquad\text{\ensuremath{\alpha}=\ensuremath{\frac{\tau}{1+\tau}}}}.
\]
In contrast, the primal-dual form of the proximal point method achieves
at least that convergence rate for parameters:
\[
{\textstyle \eta=\sqrt{\mu L},\qquad\tau=\frac{1}{\eta},\qquad\alpha=1}.
\]
The difference in $\tau$ arises from the difference in the scaling
of the Bregman penalty compared to the Euclidean penalty. The Bregman
generator $f^{*}$ is strongly convex with constant $1/L$ whereas
the Euclidean generator $\frac{1}{2}\left\Vert \cdot\right\Vert ^{2}$
is strongly convex with constant 1, so the change in scale requires
rescaling by $L$.

\subsection{Interpretations\vspace{-0.5em}
}

After the change in geometry, the $g$ update no longer gives a dual
point that is directly the gradient of the primal iterate. However,
notice that the term we are attempting to minimize in the $g$ step:
\[
f^{*}(g)-\bigl\langle g,\,x^{k-1}-\frac{\alpha}{\eta}g^{k-1}\bigr\rangle,
\]
has a fixed point of $\nabla f^{*}\left(g^{k}\right)=x^{k-1}-\frac{\alpha}{\eta}g^{k},$
which is precisely an $\alpha$-weight version of the proximal point's
key property from Equation \ref{eq:basic_dual_condition}. Essentially
we have relaxed the proximal-point method. Instead of this relation
holding precisely at every step, we are instead constantly taking
steps which pull $g$ closer to satisfying it.

\subsection{Inertial form\vspace{-0.5em}
}

The primal-dual view of the proximal point method can also be written
in terms of the quantity $z^{k-1}=x^{k-1}-\frac{\alpha}{\eta}g^{k-1}$
instead of $x^{k-1}$. This form is useful for the construction of
ODEs that model the discrete dynamics. Under this change of variables
the updates are:
\begin{align}
g^{k} & =\arg\min_{g}\left\{ f^{*}(g)-\left\langle g,\,z^{k-1}\right\rangle +\frac{1}{2\eta}\left\Vert g-g^{k-1}\right\Vert ^{2}\right\} ,\nonumber \\
z^{k} & =z^{k-1}-\frac{1}{\eta}g^{k}-\frac{\alpha}{\eta}\left(g^{k}-g^{k-1}\right).\label{eq:inertial-prox-point}
\end{align}

\subsection{Relation to the heavy ball method\vspace{-0.5em}
}

Consider Equation \ref{eq:bregman-full-form} with $\alpha=0$, which
removes the over-extrapolation before the proximal operation. If we
define $\beta=\frac{\tau}{1+\tau}$ we may write the method as:
\[
\begin{aligned}x^{k} & =x^{k-1}-\frac{1}{\eta}f^{\prime}(y^{k-1}),\qquad y^{k}=\beta y^{k-1}+\left(1-\beta\right)x^{k}.\end{aligned}
\]
We can eliminate $x^{k}$ from the $y^{k}$ update above by plugging
in the $x^{k}$ step equation, then using the $y^{k}$ update from
the previous step in the form $\left(1-\beta\right)x^{k-1}=y^{k-1}-\beta y^{k-2}:$
\begin{align*}
y^{k} & =\beta y^{k-1}+\left(1-\beta\right)\left(x^{k-1}-\frac{1}{\eta}f^{\prime}(y^{k-1})\right)\\
 & =\beta y^{k-1}-\left(1-\beta\right)\frac{1}{\eta}f^{\prime}(y^{k-1})+\left[y^{k-1}-\beta y^{k-2}\right]\\
 & =y^{k-1}-\left(1-\beta\right)\frac{1}{\eta}f^{\prime}(y^{k-1})+\beta\left[y^{k-1}-y^{k-2}\right].
\end{align*}
This has the exact form of the heavy ball method with step size $\left(1-\beta\right)/\eta$
and momentum $\beta$. We can also derive the heavy ball method by
starting from the saddle-point expression for $f$:
\[
\min_{x}f(x)=\min_{x}\max_{g}\left\{ \left\langle x,g\right\rangle -f^{*}(g)\right\} .
\]
The alternating-block gradient descent/ascent method on the objective
$\left\langle x,g\right\rangle -f^{*}(g)$ with step-size $\gamma$
is simply:
\[
\begin{aligned}g^{k} & =g^{k-1}+\frac{1}{\gamma}\left[x^{k-1}-\nabla f^{*}(g^{k-1})\right],\end{aligned}
\qquad x^{k}=x^{k-1}-\gamma g^{k}.
\]
If we instead perform a Bregman proximal update in the dual geometry
for the $g$ part, we arrive at the same equations as we had for the
primal-dual proximal point method but with $\alpha=0$, yielding the
heavy ball method. In order to get the accelerated gradient method
instead of the heavy ball method, the extra inertia that arises from
starting from the proximal point method instead of the saddle point
formulation appears to be crucial.

\section{Dual geometry in continuous time\vspace{-0.5em}
}

The inertial form (Equation \ref{eq:inertial-prox-point}) of the
proximal point method can be formulated as an ODE in a very natural
way, by mapping $z^{k}-z^{k-1}\rightarrow\dot{z}$ and $g^{k}-g^{k-1}\rightarrow\dot{g}$,
and taking $x$ and $g$ to be at time $t$. This is the inverse of
the Euler class of discretizations applied separately to the two terms,
which is the most natural way to discretize an ODE. The resulting
proximal point ODE is:
\[
\begin{aligned}\dot{g} & =f_{g}(z,g,t)\doteq-\frac{1}{\tau}\nabla f^{*}\left(g\right)+\frac{1}{\tau}z,\\
\dot{z} & =f_{z}(z,g,t)\doteq-\frac{1}{\eta}g-\frac{\alpha}{\eta}\dot{g}.
\end{aligned}
\]
We have suppressed the dependence on $t$ of each quantity for notational
simplicity. We can treat $g$ more formally as a point $g\in\mathcal{M}$
on a Hessian manifold $\mathcal{M}$. Then the solution for the $g$
variable of the ODE is a curve $\gamma(t):I\rightarrow\mathcal{TM}$
from an interval $I$ to the tangent bundle on the manifold so the
velocity $\dot{\gamma}(t)\in T_{g}\mathcal{M}$ obeys the ODE: $\dot{\gamma}(t)=f_{g}(z,g,t).$
The right hand side of the ODE is a point in the tangent space of
the manifold at $\gamma(t)$, expressed in Euclidean coordinates.

We can now apply the same partial change of geometry that we used
in the discrete case. We will consider the quantity $f_{g}(z,g,t)$
to be a tangent vector in dual tangent space coordinates For the $\phi=f^{*}$
Hessian manifold, instead of its primal tangent space coordinates
(which would leave the ODE unchanged). The variable $g$ remains in
primal coordinates with respect to $\phi$, so we must add to the
ODE a change of coordinates for the tangent vector, yielding:
\[
\dot{g}=\left(\nabla^{2}f^{*}(g)\right)^{-1}f_{g}(z,g,t),
\]
where we have used the inverse of Equation \ref{eq:coc-tangent},
with $\phi=f^{*}$. We can rewrite this as:
\begin{align*}
f_{g}(z,g,t)=\nabla^{2}f^{*}(g)\dot{g} & =\frac{d}{dt}\nabla f^{*}(g),
\end{align*}
giving the AGM ODE system:
\begin{gather*}
{\textstyle \frac{d}{dt}\nabla f^{*}(g)=-\frac{1}{\tau}\nabla f^{*}\left(g\right)+\frac{1}{\tau}z,\qquad\dot{z}=-\frac{1}{\eta}g-\frac{\alpha}{\eta}\dot{g}.}
\end{gather*}
It is now easily seen that the implicit Euler update for the $g$
variable with $z$ fixed now corresponds to the solution of the Bregman
proximal operation considered in the discrete case. So this ODE is
a natural continuous time analogue to the accelerated gradient method. 

\subsection*{Convergence in continuous time\vspace{-0.5em}
}

\begin{wrapfigure}{r}{0.4\columnwidth}%
\begin{centering}
\vspace{-2em}
\includegraphics[height=2in]{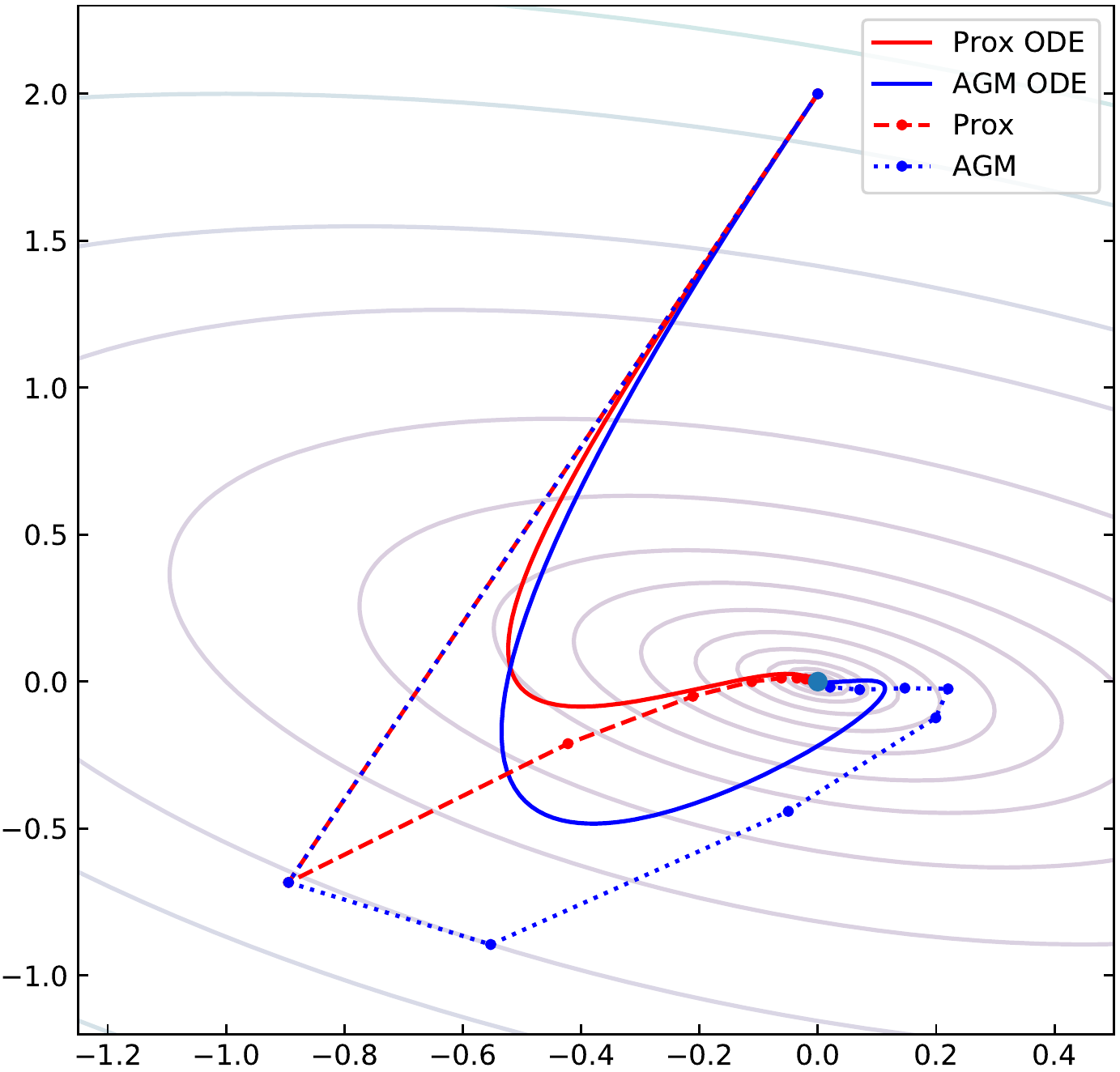}
\par\end{centering}
{\small{}\caption{{\small{}\label{fig:convergence-example}Paths for the quadratic problem
$f(x)=\frac{1}{2}x^{T}Ax$ with $A=[2,\,1;\,1,\,3].$ }}
}\vspace{0.6em}
\end{wrapfigure}%
The natural analogy to convergence in continuous time is known as
the decay rate of the ODE. A sufficient condition for an ODE with
parameters $u=[z;g]$ to decay with constant $\rho$ is:
\[
\left\Vert u(t)-u^{*}\right\Vert \leq\exp\left(-t\rho\right)\left\Vert u(0)-u^{*}\right\Vert ,
\]
where $u^{*}$ is a fixed point. We can relate this to the discrete
case by noting that $\exp(-t\rho)=\lim_{k\rightarrow\infty}(1-\frac{t}{k}\rho)^{k}$,
so given our discrete-time convergence rate is proportional to $(1-\sqrt{\mu/L})^{k},$
we would expect values of $\rho$ proportional to $\sqrt{\mu/L}$
if the ODE behaves similarly to the discrete process. We have been
able to establish this result for both the proximal and AGM ODEs for
quadratic objectives (proof in the Appendix in the supplementary material).
\begin{thm}
The proximal and AGM ODEs decay with at least the following rates
for $\mu$-strongly convex and $L$-smooth quadratic objective functions
when using the same hyper-parameters as in the discrete case:
\[
{\textstyle \rho_{\text{prox}}\geq\frac{\sqrt{\mu}}{\sqrt{\mu}+\sqrt{L}},\quad\rho_{\text{AGM}}\geq\frac{1}{2}\sqrt{\frac{\mu}{L}}}.
\]
\end{thm}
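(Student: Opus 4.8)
The plan is to use the fact that for a quadratic $f(x)=\tfrac12 x^{T}Ax$ with $\mu I\preceq A\preceq LI$ the entire construction becomes linear: $\nabla f(x)=Ax$, $f^{*}(g)=\tfrac12 g^{T}A^{-1}g$, $\nabla f^{*}(g)=A^{-1}g$, and $\nabla^{2}f^{*}(g)=A^{-1}$ is constant. Substituting these into the proximal ODE and into the AGM ODE turns each into an autonomous linear system $\dot u = Mu$ in the variable $u=[z;g]$, whose unique fixed point is $u^{*}=0$, so that the decay condition $\|u(t)-u^{*}\|\le e^{-t\rho}\|u(0)-u^{*}\|$ becomes a statement about $\|e^{Mt}\|$. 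Since $A$ is symmetric I would diagonalize $A=Q\Lambda Q^{T}$ and change coordinates in both the $z$ and the $g$ blocks; because every occurrence of $A$ or $A^{-1}$ acts by scalar multiplication in this eigenbasis, $M$ block-diagonalizes into a family of $2\times2$ blocks $M_{\lambda}$, one per eigenvalue $\lambda\in[\mu,L]$ of $A$. It then suffices to establish the claimed decay for each planar block, uniformly in $\lambda$.

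For each block I would read off the trace and determinant and work with the characteristic polynomial $s^{2}-(\operatorname{tr}M_{\lambda})s+\det M_{\lambda}$. With the stated hyper-parameters one finds that $\det M_{\lambda}$ is essentially $\lambda$-free — it equals $1$ for the proximal ODE and $\lambda/L$ for the AGM ODE — while $\operatorname{tr}M_{\lambda}$ is negative, of size $\Theta(\eta/\lambda)$ and $\Theta(1/\tau)$ respectively. Splitting into the complex-conjugate regime ($\operatorname{tr}^{2}<4\det$) and the real regime: in the complex case both eigenvalues have real part $\tfrac12\operatorname{tr}M_{\lambda}$, which is already at most $-\rho$ by inspection; in the real case both roots are negative (their product $\det M_{\lambda}$ is positive and their sum $\operatorname{tr}M_{\lambda}$ negative), so the root nearest the origin has magnitude at least $\det M_{\lambda}/|\operatorname{tr}M_{\lambda}|$. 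Plugging in and simplifying gives a per-block rate of at least $\lambda/(\lambda+\eta)$ for the proximal ODE and, writing $q=\sqrt{\mu/L}$, at least $q(1+q)/(1+2q)\ge q/2$ for the AGM ODE. Both quantities are monotone in $\lambda$, so the minimum over $\lambda\in[\mu,L]$ is attained at $\lambda=\mu$, yielding $\rho_{\mathrm{prox}}\ge \sqrt{\mu}/(\sqrt{\mu}+\sqrt{L})$ and $\rho_{\mathrm{AGM}}\ge\tfrac12\sqrt{\mu/L}$.

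The point that needs genuine care — and which I expect to be the main obstacle — is converting a bound on the spectral abscissa of each $M_{\lambda}$ into the stated norm inequality, since a $2\times2$ block with skewed eigenvectors can transiently inflate the Euclidean norm even when all eigenvalues have real part below $-\rho$. I would handle this block by block: in the real-eigenvalue case conjugate $M_{\lambda}$ by the explicit diagonal rescaling that balances its two off-diagonal entries, which makes it symmetric; in the complex case conjugate it to the rotation--scaling normal form $aI+bJ$. In either normal form the semigroup norm equals $e^{(\text{abscissa})t}$ exactly; equivalently one exhibits a quadratic Lyapunov function $v^{T}P_{\lambda}v$ with $\dot V\le -2\rho V$, and assembling these (the global Lyapunov matrix being block-diagonal in the eigenbasis of $A$) gives the theorem, provided one is explicit about which norm the decay is measured in. The rest is just the elementary, if slightly tedious, algebra of the two parameter regimes together with the monotonicity check over $\lambda$.
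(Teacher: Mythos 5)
Your proposal is correct and follows essentially the same route as the paper: diagonalize $H=U\Lambda U^{T}$, conjugate by $\mathrm{diag}(U,U)$ and permute so the $2n\times 2n$ system matrix splits into per-eigenvalue $2\times2$ blocks $T_{\lambda}$, then bound the real parts of the block eigenvalues and minimize over $\lambda\in[\mu,L]$; your trace/determinant values ($\det T_{\lambda}=1$ for the proximal block, $\lambda/L$ for the AGM block) are what the paper's blocks give. The small differences mostly favor you. In the real-root regime of the proximal block the paper bounds $\nu_{+}$ via concavity of the square root, landing on $\nu_{+}\le-1/(1+\eta/\lambda)$, which is exactly the $-\lambda/(\lambda+\eta)$ you obtain from the cleaner estimate $\nu_{+}\le-\det T_{\lambda}/|\operatorname{tr}T_{\lambda}|$. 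For the AGM block the paper shows the discriminant is nonpositive for $\alpha\in[0,1]$ with $\eta=\sqrt{\mu L}$, $\tau=L/\eta$, so only your complex branch actually occurs and your real-branch estimate, while valid, is vacuous there. The genuinely different part of your plan is the final step, and it is the stronger one: the paper's Lemma \ref{lem:ode-exp} passes from $\left\Vert \exp(tA)\right\Vert$ to $\max_{j}\exp(t\,\mathrm{Re}[\lambda_{j}])$ by asserting that the spectral norm is the largest eigenvalue modulus, which holds only for normal matrices; the blocks here are not normal, so the Euclidean-norm claim really carries either a factor involving the conditioning of the eigenvector matrix or a switch to an adapted norm. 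Your per-block normal-form / quadratic-Lyapunov construction supplies precisely this missing argument, at the price you yourself flag: the decay inequality is exact in the adapted block-diagonal norm and holds in the Euclidean norm only up to that norm-equivalence constant, which is the honest form of the theorem's conclusion.
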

Figure \ref{fig:convergence-example} contrasts the convergence of
the discrete and continuous variants. The two methods have quite distinct
paths whose shape is shared by their ODE counterparts.

\section{Related Work\vspace{-0.5em}
}

The application of Bregman divergence to the analysis of continuous
time views of the accelerated gradient method has recently been explored
by \citet{wibisono2016variational} and \citet{wilson-jordan-2016}.
Their approaches do not use the Bregman divergence of $f^{*}$, a
key factor of our approach. The Bregman divergence of a function $\phi$
occurs explicitly as a term in a Hamiltonian, in contrast to our view
of $\phi$ as curving space. The accelerated gradient method has been
shown to be modeled by a momentum of the form ODE $\ddot{X}+c(t)\dot{X}+\nabla f(x)=0$
by \citet{candes2014}. Natural discretizations of their ODE result
in the heavy-ball method instead of the accelerated gradient method,
unlike our form which can produce both based on the choice of $\alpha$.
The fine-grained properties of momentum ODEs have also been studied
in the quadratic case by \citet{scieur_bach2017}.

A primal-dual form of the regularized accelerated gradient method
appears in \citet{lan2017}. Our form can be seen as a special case
of their form when the regularizer is zero. Our work extends theirs,
providing an understanding of the role that geometry plays in unifying
acceleration and implicit steps.

The Riemannian connection induced by a function has been heavily explored
in the optimization literature as part of the natural gradient method
\citep{amari1998natural}, although other connections on this manifold
are less explored. The dual-flat connections have primarily seen use
in the information-geometry setting for optimization over distributions
\citep{infogeombook}.

The accelerated gradient method is not the only way to achieve accelerated
rates among first order methods. Other techniques include the Geometric
descent method of \citet{bubeck2015geometric}, where a bounding ball
is updated at each step that encloses two other balls, a very different
approach. The method described by \citet{nem-yudin} is also notable
as being closer to the conjugate gradient method than other accelerated
approaches, but at the expense of requiring a 2D search instead of
a 1D line search at each step.

\section{Conclusion}

We believe the tools of differential geometry may provide a new and
insightful avenue for the analysis of accelerated optimization methods.
The analysis we provide in this work is a first step in this direction.
The advantage of the differential geometric approach is that it provides
high level tools that make the derivation of acceleration easier to
state. This derivation, from the proximal point method to the accelerated
gradient method, is in our opinion not nearly as mysterious as the
other known approaches to understanding acceleration.

\bibliographystyle{plainnat}
\nocite{*}
\bibliography{curved_geometry}

\newpage{}

\appendix

\part*{Appendix}

\section{Reformulations of the accelerated gradient method}

\subsection*{Form II}

This simplification is described in \citet{nesterov2013introductory}
which we reproduce for completeness. Recall that Form I is given by
the updates:
\[
\begin{aligned}y^{k} & =\frac{\alpha\gamma v^{k}+\gamma x^{k}}{\alpha\mu+\gamma},\\
x^{k+1} & =y^{k}-\frac{1}{L}\nabla f(y^{k}),\\
v^{k+1} & =\left(1-\alpha\right)v^{k}+\frac{\alpha\mu}{\gamma}y^{k}-\frac{\alpha}{\gamma}\nabla f(y^{k}).
\end{aligned}
\]
Nesterov specifies the requirement that $\gamma=(1-\alpha)\gamma+\alpha\mu$.
When $\alpha=\sqrt{\mu/L}$ then $\gamma$ must then satisfy:
\[
\gamma=(1-\sqrt{\mu/L})\gamma+\mu\sqrt{\mu/L},
\]
\[
\therefore1=(1-\sqrt{\mu/L})+\frac{\mu}{\gamma}\sqrt{\mu/L},
\]
\[
\therefore\sqrt{\mu/L})=\frac{\mu}{\gamma}\sqrt{\mu/L},
\]
\[
\therefore\mu=\gamma
\]
We may rewrite the $y^{k}$ definition as:
\[
\left(\alpha\mu+\gamma\right)y^{k}=\alpha\gamma v^{k}+\gamma x^{k},
\]
\[
\therefore v^{k}=\frac{1}{\alpha\gamma}\left[\left(\alpha\mu+\gamma\right)y^{k}-\gamma x^{k}\right].
\]
Plugging this into the $v$ step:
\begin{align*}
v_{k+1} & =\left(1-\alpha\right)v^{k}+\frac{\alpha\mu}{\gamma}y^{k}-\frac{\alpha}{\gamma}\nabla f(y^{k})\\
 & =\frac{1-\alpha}{\alpha\gamma}\left[\left(\alpha\mu+\gamma\right)y^{k}-\gamma x^{k}\right]+\frac{\alpha\mu}{\gamma}y^{k}-\frac{\alpha}{\gamma}\nabla f(y^{k})\\
 & =\frac{1}{\gamma}\left[\left(1-\alpha\right)\mu y^{k}+\left(\frac{1-\alpha}{\alpha}\right)\gamma y^{k}+\alpha\mu y^{k}\right]-\frac{1-\alpha}{\alpha}x^{k}-\frac{\alpha}{\gamma}\nabla f(y^{k})\\
 & =\frac{1}{\alpha\gamma}\left[\alpha\mu y^{k}+\left(1-\alpha\right)\gamma y^{k}\right]-\frac{1-\alpha}{\alpha}x^{k}-\frac{\alpha}{\gamma}\nabla f(y^{k})\\
 & =\frac{1}{\alpha}\left[y^{k}\right]-\frac{1-\alpha}{\alpha}x^{k}-\frac{\alpha}{\gamma}\nabla f(y^{k})\\
 & =x^{k}+\frac{1}{\alpha}\left(y^{k}-x^{k}\right)-\frac{\alpha}{\gamma}\nabla f(y^{k})\\
 & =x^{k}+\frac{1}{\alpha}\left(x^{k+1}-x^{k}\right).
\end{align*}

For $y^{k}$, we start with using $\gamma=\mu$, then apply the $v$
simplification;
\begin{align*}
y^{k} & =\frac{\alpha\gamma v^{k}+\gamma x^{k}}{\alpha\gamma+\gamma}\\
 & =\frac{\left(\alpha\gamma x^{k-1}+\gamma\left(x^{k}-x^{k-1}\right)\right)+\gamma x^{k}}{\alpha\gamma+\gamma}\\
 & =x^{k}+\frac{\left(\alpha\gamma x^{k-1}+\gamma\left(x^{k}-x^{k-1}\right)\right)-\alpha\gamma x^{k}}{\alpha\gamma+\gamma}\\
 & =x^{k}+\frac{\alpha\gamma\left(x^{k-1}-x^{k}\right)+\gamma\left(x^{k}-x^{k-1}\right)}{\alpha\gamma+\gamma}\\
 & =x^{k}+\frac{\gamma-\alpha\gamma}{\alpha\gamma+\gamma}\left(x^{k}-x^{k-1}\right).
\end{align*}
Note that by multiplying by $\sqrt{L}/\mu$ we get:
\[
\frac{\mu-\mu\sqrt{\mu/L}}{\mu\sqrt{\mu/L}+\mu}=\frac{\sqrt{L}-\sqrt{\mu}}{\sqrt{\mu}+\sqrt{L}}=\beta.
\]

\subsection*{Sutskever's form}

Recall Sutskever's form:
\[
\begin{aligned}p_{\text{Sut}}^{k+1} & =\beta p_{\text{Sut}}^{k}-\frac{1}{L}\nabla f\left(x_{\text{Sut}}^{k}+\beta p_{\text{Sut}}^{k}\right),\\
x_{\text{Sut}}^{k+1} & =x_{\text{Sut}}^{k}+p_{\text{Sut}}^{k+1}.
\end{aligned}
\]
and Nesterov's form:
\[
\begin{aligned}x_{\text{Nes}}^{k+1} & =y_{\text{Nes}}^{k}-\frac{1}{L}\nabla f(y_{\text{Nes}}^{k}),\\
y_{\text{Nes}}^{k+1} & =x_{\text{Nes}}^{k+1}+\beta\left(x_{\text{Nes}}^{k+1}-x_{\text{Nes}}^{k}\right).
\end{aligned}
\]
We will show that using the substitutions:
\[
p_{\text{Sut}}^{k+1}=x_{\text{Nes}}^{k+1}-x_{\text{Nes}}^{k},
\]
\[
y_{\text{Nes}}^{k}=x_{\text{Sut}}^{k}+\beta p_{\text{Sut}}^{k},
\]
applied to Sutskever's form gives Nesterov's form. We start with the
momentum term:
\[
p_{\text{Sut}}^{k+1}=\beta p_{\text{Sut}}^{k}-\frac{1}{L}\nabla f\left(y_{\text{Nes}}^{k}\right),
\]
\[
\therefore x_{\text{Nes}}^{k+1}-x_{\text{Nes}}^{k}=\beta\left(x_{\text{Nes}}^{k}-x_{\text{Nes}}^{k-1}\right)-\frac{1}{L}\nabla f\left(y_{\text{Nes}}^{k}\right),
\]
\[
\therefore x_{\text{Nes}}^{k+1}+\frac{1}{L}\nabla f\left(y_{\text{Nes}}^{k}\right)=x_{\text{Nes}}^{k}+\beta\left(x_{\text{Nes}}^{k}-x_{\text{Nes}}^{k-1}\right).
\]
Defining: $y_{\text{Nes}}^{k}:=x_{\text{Nes}}^{k+1}+\frac{1}{L}\nabla f\left(y_{\text{Nes}}^{k}\right)$
and applying on the right gives Nesterov's $y$ update:
\[
y_{\text{Nes}}^{k}=x_{\text{Nes}}^{k}+\beta\left(x_{\text{Nes}}^{k}-x_{\text{Nes}}^{k-1}\right).
\]

\subsection*{Modern form}

We want:
\[
\begin{aligned}p_{\text{Mod}}^{k+1} & =\beta p_{\text{Mod}}^{k}+\nabla f(x_{\text{Mod}}^{k}),\\
x_{\text{Mod}}^{k+1} & =x_{\text{Mod}}^{k}-\frac{1}{L}\left(\nabla f(x_{\text{Mod}}^{k})+\beta p_{\text{Mod}}^{k+1}\right).
\end{aligned}
\]
Starting from Sutskever's form, 
\[
\begin{aligned}p_{\text{Sut}}^{k+1} & =\beta p_{\text{Sut}}^{k}-\frac{1}{L}\nabla f\left(x_{\text{Sut}}^{k}+\beta p_{\text{Sut}}^{k}\right),\\
x_{\text{Sut}}^{k+1} & =x_{\text{Sut}}^{k}+p_{\text{Sut}}^{k+1}.
\end{aligned}
\]
Define $x_{\text{Mod}}^{k}=x_{\text{Sut}}^{k}+\beta p_{\text{Sut}}^{k}$.
Note that this is equal to $y_{\text{Nes}}^{k}$ by definition. So
we have:
\[
x_{\text{Sut}}^{k}=x_{\text{Mod}}^{k}-\beta p_{\text{Sut}}^{k}.
\]
Plugging that into the Sutskever step
\[
x_{\text{Mod}}^{k+1}-\beta p_{\text{Sut}}^{k+1}=x_{\text{Mod}}^{k}-\beta p_{\text{Sut}}^{k}+p_{\text{Sut}}^{k+1},
\]
\begin{align*}
\therefore x_{\text{Mod}}^{k+1} & =x_{\text{Mod}}^{k}+\left(p_{\text{Sut}}^{k+1}-\beta p_{\text{Sut}}^{k}\right)+\beta p_{\text{Sut}}^{k+1}\\
 & =x_{\text{Mod}}^{k}-\frac{1}{L}f^{\prime}\left(x_{\text{Mod}}^{k}\right)+\beta p_{\text{Sut}}^{k+1}.
\end{align*}

Then define $p_{\text{mod}}^{k}=-Lp_{\text{Sut}}^{k},$ so the update
becomes:
\[
x_{\text{Mod}}^{k+1}=x_{\text{Mod}}^{k}-\frac{1}{L}\left(\nabla f\left(x_{\text{Mod}}^{k}\right)+\beta p_{\text{Sut}}^{k+1}\right).
\]
The momentum update changes from:
\[
p_{\text{Sut}}^{k+1}=\beta p_{\text{Sut}}^{k}-\frac{1}{L}\nabla f\left(x_{\text{Sut}}^{k}+\beta p_{\text{Sut}}^{k}\right),
\]
to:
\[
-\frac{1}{L}p_{\text{mod}}^{k+1}=-\beta\frac{1}{L}p_{\text{mod}}^{k}-\frac{1}{L}\nabla f\left(x_{\text{Mod}}^{k}\right),
\]
\[
\therefore p_{\text{mod}}^{k+1}=\beta p_{\text{mod}}^{k}+\nabla f\left(x_{\text{Mod}}^{k}\right).
\]

\subsection*{Auslender \& Teboulle form }

\paragraph*{
\[
y_{\text{AT}}^{k}=(1-\theta_{\text{AT}})\hat{x}_{\text{AT}}^{k}+\theta_{\text{AT}}z_{\text{AT}}^{k},
\]
\[
z_{\text{AT}}^{k+1}=z_{\text{AT}}^{k}-\frac{\gamma_{\text{AT}}}{\theta_{\text{AT}}}\nabla f(y_{\text{AT}}^{k}),
\]
\[
\hat{x}_{\text{AT}}^{k+1}=(1-\theta_{\text{AT}})\hat{x}_{\text{AT}}^{k}+\theta_{\text{AT}}z_{\text{AT}}^{k+1}.
\]
}

We first eliminate $y_{\text{AT}}^{k}$ from the $\hat{x}_{\text{AT}}^{k+1}$
update:
\begin{align*}
\hat{x}_{\text{AT}}^{k+1} & =(1-\theta_{\text{AT}})\hat{x}_{\text{AT}}^{k}+\theta_{\text{AT}}z_{\text{AT}}^{k+1}\\
 & =(1-\theta_{\text{AT}})\hat{x}_{\text{AT}}^{k}+\theta_{\text{AT}}\left(z_{\text{AT}}^{k}-\frac{\gamma_{\text{AT}}}{\theta_{\text{AT}}}\nabla f(y_{\text{AT}}^{k})\right)\\
 & =\hat{x}_{k}+\theta_{\text{AT}}\left(z_{\text{AT}}^{k}-\hat{x}_{\text{AT}}^{k}\right)-\gamma_{\text{AT}}\nabla f\left(\hat{x}_{k}+\theta_{\text{AT}}\left(z_{\text{AT}}^{k}-\hat{x}_{\text{AT}}^{k}\right)\right).
\end{align*}
This notational similarity with Nesterov's $x_{\text{Nes}}^{k+1}=y_{\text{Nes}}^{k}-\frac{1}{L}\nabla f(y_{\text{Nes}}^{k})$
suggests matching 
\[
y_{\text{Nes}}^{k}=x_{\text{mod}}^{k}=y_{\text{AT}}^{k}=\hat{x}_{k}+\theta_{\text{AT}}\left(z_{\text{AT}}^{k}-\hat{x}_{\text{AT}}^{k}\right),
\]
as well as $\gamma_{\text{AT}}=\frac{1}{L}$ and
\[
x_{\text{Nes}}^{k}=\hat{x}_{\text{AT}}^{k}.
\]
Note that using using the step for $\hat{x}$ we can rearrange to
get:
\[
z_{\text{AT}}^{k+1}=\frac{1}{\theta_{\text{AT}}}\hat{x}_{\text{AT}}^{k+1}-\frac{(1-\theta_{\text{AT}})}{\theta_{\text{AT}}}\hat{x}_{\text{AT}}^{k}.
\]

Now to determine $\theta$ we simplify using this substitution:
\begin{align*}
y_{\text{AT}}^{k} & =\hat{x}_{\text{AT}}^{k}+\theta_{\text{AT}}\left(z_{\text{AT}}^{k}-\hat{x}_{\text{AT}}^{k}\right)\\
 & =\hat{x}_{\text{AT}}^{k}+\theta_{\text{AT}}\left(\frac{1}{\theta_{\text{AT}}}\hat{x}_{\text{AT}}^{k}-\frac{(1-\theta_{\text{AT}})}{\theta_{\text{AT}}}\hat{x}_{\text{AT}}^{k-1}\right)\\
 & =\hat{x}_{\text{AT}}^{k}+\theta_{\text{AT}}\left(\frac{1-\theta_{\text{AT}}}{\theta_{\text{AT}}}\hat{x}_{\text{AT}}^{k}-\frac{(1-\theta_{\text{AT}})}{\theta_{\text{AT}}}\hat{x}_{\text{AT}}^{k-1}\right)\\
 & =\hat{x}_{\text{AT}}^{k}+\left(1-\theta_{\text{AT}}\right)\left(\hat{x}_{\text{AT}}^{k}-\hat{x}_{\text{AT}}^{k-1}\right).
\end{align*}
So my matching constants against Nesterov's method we have $\beta_{\text{Nes}}=1-\theta_{\text{AT}}.$

\subsection*{Lan form}

\[
\begin{aligned}\tilde{x}_{\text{Lan}}^{k} & =\alpha_{\text{Lan}}(x_{\text{Lan}}^{k-1}-x_{\text{Lan}}^{k-2})+x_{\text{Lan}}^{k-1},\\
\underline{x}_{\text{Lan}}^{k} & =\frac{\tilde{x}_{\text{Lan}}^{k}+\tau_{\text{Lan}}\underline{x}_{\text{Lan}}^{k-1}}{1+\tau_{\text{Lan}}},\\
g_{\text{Lan}}^{k} & =\nabla f(\underline{x}_{\text{Lan}}^{k}),\\
x_{\text{Lan}}^{k} & =x_{\text{Lan}}^{k-1}-\frac{1}{\eta_{\text{Lan}}}g_{\text{Lan}}^{k}.
\end{aligned}
\]
Now we can eliminate $\tilde{x}$ from the updates to give
\[
\underline{x}_{\text{Lan}}^{k}=\frac{\tau_{\text{Lan}}\underline{x}_{\text{Lan}}^{k-1}+x_{\text{Lan}}^{k-1}+\alpha_{\text{Lan}}(x_{\text{Lan}}^{k-1}-x_{\text{Lan}}^{k-2})}{1+\tau_{\text{Lan}}},
\]
Also consider the $y^{k}$ update for the AT method:

\paragraph*{
\[
y_{\text{AT}}^{k}=(1-\theta_{\text{AT}})\hat{x}_{\text{AT}}^{k}+\theta_{\text{AT}}z_{\text{AT}}^{k},
\]
}

We can write this to not involve $\hat{x}$, giving:
\[
y_{\text{AT}}^{k}=(1-\theta_{\text{AT}})y_{\text{AT}}^{k-1}+\theta_{\text{AT}}z_{AT}^{k}+(1-\theta_{\text{AT}})\theta_{\text{AT}}\left(z_{\text{AT}}^{k}-z_{\text{AT}}^{k-1}\right).
\]
This form suggests matching the iterates with:
\[
\underline{x}_{\text{Lan}}^{k}=y_{\text{AT}}^{k},\quad x_{\text{Lan}}^{k}=z_{\text{AT}}^{k}.
\]
Under this matching, the constants need to satisfy the following relations:
\[
\left(1-\theta_{\text{AT}}\right)=\frac{\tau_{\text{Lan}}}{1+\tau_{\text{Lan}}},
\]
\[
\left(\theta_{\text{AT}}+\theta_{\text{AT}}-\theta_{\text{AT}}^{2}\right)=\frac{1+\alpha_{\text{Lan}}}{1+\tau_{\text{Lan}}},
\]
\[
\left(1-\theta_{\text{AT}}\right)\theta_{\text{AT}}=\frac{\alpha_{\text{Lan}}}{1+\tau_{\text{Lan}}}.
\]

The settings:
\[
\tau_{\text{Lan}}=\frac{1-\theta_{\text{AT}}}{\theta_{\text{AT}}},\;\alpha_{\text{Lan}}=1-\theta_{\text{AT}},
\]
result in these three constraints being satisfied:

\[
\frac{\tau}{1+\tau}=\frac{\frac{1-\theta}{\theta}}{1+\frac{1-\theta}{\theta}}=\frac{1-\theta}{\theta+1-\theta}=1-\theta\checkmark,
\]
\[
\frac{1+\alpha}{1+\tau}=\frac{1+1-\theta}{1+\frac{1-\theta}{\theta}}=\frac{2\theta-\theta^{2}}{\theta+1-\theta}=2\theta-\theta^{2}\checkmark,
\]
\[
\frac{\alpha}{1+\tau}=\frac{1-\theta}{1+\frac{1-\theta}{\theta}}=\frac{\theta-\theta^{2}}{1}\checkmark.
\]
Matching the $x_{\text{Lan}}^{k}$ step $x_{\text{Lan}}^{k}=x_{\text{Lan}}^{k-1}-\frac{1}{\eta_{\text{Lan}}}g_{\text{Lan}}^{k}.$
requires $\frac{1}{\eta_{\text{Lan}}}=\frac{\gamma_{\text{AT}}}{\theta_{\text{AT}}}$
also.

\section{Continuous time theory}
\begin{lem}
\label{lem:ode-exp}Consider a linear ODE of the form:
\[
\dot{u}=A\left(u-u^{*}\right),
\]
Where $A$ is real and diagonalizable but not necessarily normal or
symmetric, and $x^{*}$ is the fixed point. Then
\begin{align*}
\left\Vert u(t)-u^{*}\right\Vert  & \leq\text{\ensuremath{\max}}_{j}\exp\left(tRe[\lambda_{j}]\right)\left\Vert u(0)-u^{*}\right\Vert ,
\end{align*}
where $\lambda_{j}$ are eigenvalues of $A$ with real part $Re[\lambda_{j}]$.
\end{lem}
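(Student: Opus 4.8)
The plan is to reduce the statement to a constant-coefficient homogeneous system and then diagonalize. First I would substitute $v(t) = u(t) - u^{*}$; since $u^{*}$ is a fixed point, the ODE becomes $\dot v = Av$, whose solution is $v(t) = e^{tA}v(0)$. The lemma is then equivalent to the operator-norm estimate $\|e^{tA}\| \le \max_{j}\exp(t\,\mathrm{Re}[\lambda_{j}])$, so the whole proof is really a statement about the matrix exponential of a diagonalizable matrix.

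Next, using diagonalizability I would write $A = P\Lambda P^{-1}$ with $\Lambda = \mathrm{diag}(\lambda_{1},\dots,\lambda_{n})$ (complexifying if some eigenvalues are non-real), so that $e^{tA} = P\,e^{t\Lambda}\,P^{-1}$. In the eigenbasis coordinates $w = P^{-1}v$ the system decouples into the scalar equations $\dot w_{j} = \lambda_{j} w_{j}$, hence $|w_{j}(t)| = e^{t\,\mathrm{Re}[\lambda_{j}]}|w_{j}(0)|$ for each $j$; squaring, summing over $j$, and bounding by the maximum gives $\|w(t)\| \le \max_{j}e^{t\,\mathrm{Re}[\lambda_{j}]}\,\|w(0)\|$. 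Translating back through $v = Pw$ yields the claim, with the norm on the left understood as the eigenbasis-adapted norm $\|x\| := \|P^{-1}x\|$; equivalently, in the ordinary Euclidean norm one obtains the same exponential factor multiplied by the condition number $\kappa(P) = \|P\|\,\|P^{-1}\|$ of the eigenvector matrix.

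I expect the only real subtlety to be exactly that last point: because $A$ is not assumed normal, $e^{tA}$ need not be a Euclidean contraction even when every $\mathrm{Re}[\lambda_{j}]$ is negative (there can be transient growth governed by $\kappa(P)$), so the clean prefactor $\max_{j}e^{t\,\mathrm{Re}[\lambda_{j}]}$ is attained only in a norm adapted to the eigenvectors. Since all norms on $\mathbb{R}^{n}$ are equivalent, this does not affect the decay \emph{rate} $\rho = -\max_{j}\mathrm{Re}[\lambda_{j}]$ that the main theorem ultimately invokes, so I would simply prove the lemma with respect to this adapted norm (or state it with an absorbed constant $\kappa(P)$ and note the norm equivalence). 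Everything else — solving the decoupled scalar ODEs and bookkeeping the change of basis — is routine and needs no further estimates.
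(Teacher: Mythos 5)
Your proposal follows essentially the same route as the paper's proof: shift the origin so the system becomes the homogeneous ODE $\dot v = Av$, write $v(t)=e^{tA}v(0)$, diagonalize $A=P\Lambda P^{-1}$ so that $e^{tA}=P\,e^{t\Lambda}P^{-1}$, and observe that each eigen-coordinate decays like $e^{t\,\mathrm{Re}[\lambda_j]}$. Where you diverge is precisely where the paper's own argument is loose: the paper asserts that the spectral norm of $U\,\mathrm{diag}(e^{t\lambda_j})\,U^{-1}$ is the largest modulus of the eigenvalues, which is valid only when the eigenvector matrix is orthogonal, i.e.\ when $A$ is normal --- yet the lemma explicitly allows non-normal $A$, and the blocks $T_i$ arising in the applications are indeed non-normal. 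For merely diagonalizable $A$ (e.g.\ an upper-triangular $2\times 2$ matrix with eigenvalues $-1,-2$ and a huge off-diagonal entry) the Euclidean operator norm of $e^{tA}$ exhibits transient growth and can greatly exceed $\max_j e^{t\,\mathrm{Re}[\lambda_j]}$, so the stated inequality with prefactor $1$ holds only in a norm adapted to the eigenbasis, or in the Euclidean norm with the extra factor $\kappa(P)=\|P\|\,\|P^{-1}\|$. Your fix --- proving the bound in the adapted norm $\|P^{-1}\cdot\|$, or absorbing $\kappa(P)$ and invoking norm equivalence --- is the correct repair, and as you note it leaves the decay \emph{rate} $\rho=-\max_j \mathrm{Re}[\lambda_j]$, which is all the subsequent theorems actually use, unchanged; strictly, though, the displayed bounds in those theorems (and in this lemma as stated) should carry the same constant.
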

\begin{proof}
Because the inhomogeneous term $Au^{*}$ is constant w.r.t time, without
loss of generality we can reduce our problem to a homogenous ODE by
shifting the origin:
\[
\dot{u}=Au.
\]
The solution of a linear ODE of this form is given in closed form
using the matrix exponential:
\[
u(t)=\exp\left(tA\right)u(0),
\]
we can use this formula to bound the norm of $u(t)$:
\[
\therefore\left\Vert u(t)\right\Vert \leq\left\Vert \exp\left(tA\right)\right\Vert \left\Vert u(0)\right\Vert ,
\]
where $\left\Vert \cdot\right\Vert $ is the spectral norm. If $A$
is diagonalizable with $A=U\text{diag}(\lambda_{1},\dots\lambda,d)U^{-1}$
then the matrix exponential can be expressed as:
\[
\exp\left(tA\right)=U\left[\begin{array}{ccc}
\exp(t\lambda_{1}) & 0 & 0\\
0 & \ddots & 0\\
0 & 0 & \exp(t\lambda_{d})
\end{array}\right]U^{-1}.
\]

The spectral norm is given by the largest absolute value of the eigenvalues,
so we must consider the interaction of the real and complex parts.
For an eigenvalue $\lambda=a+bi$ of $A$, the norm takes the simple
form:
\begin{align*}
\left|\exp(t\lambda)\right| & =\text{\ensuremath{\exp}}\left(ta+tbi\right)\\
 & =\left|\text{\ensuremath{\exp}}\left(ta\right)\right|\left|\text{\ensuremath{\exp}}\left(tbi\right)\right|\\
 & =\left|\text{\ensuremath{\exp}}\left(ta\right)\right|.
\end{align*}
So the spectral norm is given by the maximum over the eigenvalues
$\lambda_{j}$ of $\exp\left(tRe[\lambda_{j}]\right)$.
\end{proof}
\begin{thm}
\label{thm:prox-ode-linear}Consider the following linear ODE:
\[
\dot{u}=A\left(u-u^{*}\right),
\]
\[
A:2n\times2n=\left[\begin{array}{cc}
-I & \;-\frac{1}{\eta}I+H^{-1}\\
\eta I & -\eta H^{-1}
\end{array}\right],
\]
where $H:n\times n$ is a real, positive definite and symmetric matrix
with minimum eigenvalue $\mu$ and maximum eigenvalue $L$. This corresponds
to the proximal ODE for a quadratic function $f(x)=\frac{1}{2}\left(x-x^{*}\right)^{T}H\left(x-x^{*}\right)$,
with $u=[x;g]$ and $u^{*}=[x^{*};0]$. Then the decay rate of the
ODE towards the origin $u^{*}=0$ can be bounded as follows for $\eta=\sqrt{\mu L}$:

\[
\left\Vert u(t)-u^{*}\right\Vert \leq\exp\left(-t\rho\right)\left\Vert u(0)-u^{*}\right\Vert ,
\]
\[
\rho=\frac{\sqrt{\mu}}{\sqrt{\mu}+\sqrt{L}}.
\]
\end{thm}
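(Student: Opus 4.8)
The plan is to apply Lemma~\ref{lem:ode-exp}, which reduces the statement to showing that every eigenvalue $\lambda$ of $A$ satisfies $\mathrm{Re}[\lambda]\le-\rho$ with $\rho=\sqrt{\mu}/(\sqrt{\mu}+\sqrt{L})$. The first step is to decouple the system using the spectral decomposition $H=Q\Lambda Q^{T}$, $Q$ orthogonal, $\Lambda=\mathrm{diag}(h_{1},\dots,h_{n})$ with $h_{i}\in[\mu,L]$. Conjugating $A$ by $\mathrm{diag}(Q,Q)$ and permuting to block form turns $A$ into a block-diagonal matrix with $n$ independent $2\times2$ blocks
\[
A_{i}=\begin{bmatrix}-1 & -\tfrac{1}{\eta}+\tfrac{1}{h_{i}}\\[3pt]\eta & -\tfrac{\eta}{h_{i}}\end{bmatrix},\qquad \eta=\sqrt{\mu L},
\]
so that $\mathrm{spec}(A)=\bigcup_{i}\mathrm{spec}(A_{i})$ and it is enough to control each block separately.

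For a fixed block, a direct computation gives $\det A_{i}=1$ for every $i$ (the contribution $\eta(\tfrac{1}{\eta}-\tfrac{1}{h_{i}})$ exactly cancels the $\tfrac{\eta}{h_{i}}$ coming from the diagonal product), and $\mathrm{tr}\,A_{i}=-(1+c_{i})$ where $c_{i}:=\eta/h_{i}=\sqrt{\mu L}/h_{i}$ ranges over $[\sqrt{\mu/L},\,\sqrt{L/\mu}]$ as $h_{i}$ ranges over $[\mu,L]$. Hence the two eigenvalues of $A_{i}$ are the roots of $\lambda^{2}+(1+c_{i})\lambda+1=0$, whose discriminant $(1+c_{i})^{2}-4$ is nonnegative precisely when $c_{i}\ge1$. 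I also record that $\rho=1/(1+\sqrt{L/\mu})$.

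The analysis then splits into two regimes, and in each the binding eigenvalue is the one of largest real part. If $c_{i}\le1$ the roots are complex conjugates with real part $-(1+c_{i})/2$, which is closest to $0$ at the smallest admissible value $c_{i}=\sqrt{\mu/L}$; there the required $-(1+\sqrt{\mu/L})/2\le-\rho$ becomes, after clearing denominators, $(\sqrt{L}+\sqrt{\mu})^{2}\ge2\sqrt{\mu L}$, i.e.\ $L+\mu\ge0$. If $c_{i}\ge1$ both roots are real and negative, with the larger being $\lambda_{+}=\tfrac12\bigl(-(1+c_{i})+\sqrt{(1+c_{i})^{2}-4}\bigr)=-2\big/\bigl((1+c_{i})+\sqrt{(1+c_{i})^{2}-4}\bigr)$, least negative at the largest admissible $c_{i}=\sqrt{L/\mu}$; writing $m=1+\sqrt{L/\mu}$ so $\rho=1/m$, the requirement $|\lambda_{+}|\ge\rho$ reduces to $m\ge\sqrt{m^{2}-4}$, i.e.\ $0\ge-4$. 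In both cases $\mathrm{Re}[\lambda]\le-\rho$, so $\max_{j}\mathrm{Re}[\lambda_{j}]\le-\rho$.

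It remains to note that each $A_{i}$ is diagonalizable whenever its two eigenvalues differ, which fails only for $h_{i}=\sqrt{\mu L}$, where $A_{i}$ is a $2\times2$ Jordan block with eigenvalue $-1$; this degenerate case can be absorbed either by perturbing $H$ and passing to the limit, or by bounding $\|\exp(tA_{i})\|$ directly and using $-1<-\rho$ to dominate the extra linear-in-$t$ factor. With $A$ diagonalizable and $\max_{j}\mathrm{Re}[\lambda_{j}]\le-\rho$, Lemma~\ref{lem:ode-exp} yields $\|u(t)-u^{*}\|\le\exp(-t\rho)\|u(0)-u^{*}\|$. The only real work here is the bookkeeping of the two-case split --- determining which eigenvalue of $H$ is extremal in each case and reducing the resulting scalar inequalities --- together with the minor nuisance of the non-diagonalizable boundary case that Lemma~\ref{lem:ode-exp} does not cover as stated.
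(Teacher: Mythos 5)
Your proof is correct and follows essentially the same route as the paper's: conjugate by $\mathrm{diag}(Q,Q)$, permute into $2\times2$ blocks, note $\det A_i=1$ and $\mathrm{tr}\,A_i=-(1+\eta/h_i)$, bound the real parts of the block eigenvalues, and invoke Lemma~\ref{lem:ode-exp}; the only cosmetic difference is that you treat the real-root case by rationalizing $\lambda_+$ and checking the extreme value $c_i=\sqrt{L/\mu}$, where the paper instead bounds the square root by its tangent line to get $\nu_+\le-1/(1+\eta\lambda_i^{-1})$. A small point in your favor: you explicitly flag the $h_i=\sqrt{\mu L}$ double-eigenvalue case, where the block is a nontrivial Jordan block and Lemma~\ref{lem:ode-exp} does not literally apply (e.g.\ whenever $\mu=L$), a degenerate case the paper's proof passes over silently.
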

\begin{proof}
We will take the approach of bounding the real parts of the eigenvalues
of $A$, so that we can directly apply Lemma \ref{lem:ode-exp}. 

Let $U\Lambda U^{T}=H$ be the eigen-decomposition of $H$. Note that
the operation of conjugation by $U$ leaves the identity matrix unchanged
($UIU^{T}=I$). Each block of A is just a weighted combination of
the identity matrix and $H^{-1}$, so this implies that conjugation
of a block by it's self gives a diagonal matrix. We can use this idea
to define a similarity transform that converts $A$ into a matrix
where each of the four blocks are diagonal. In particular we have:
\[
\left[\begin{array}{cc}
U & 0\\
0 & U
\end{array}\right]^{T}\left[\begin{array}{cc}
A_{11} & A_{12}\\
A_{21} & A_{22}
\end{array}\right]\left[\begin{array}{cc}
U & 0\\
0 & U
\end{array}\right]=\left[\begin{array}{cc}
UA_{11}U^{T} & UA_{12}U^{T}\\
UA_{21}U^{T} & UA_{22}U^{T}
\end{array}\right],
\]
where we have written A in terms of its 4 constituent $n\times n$
blocks. Each block is a weighted sum of the identity matrix and the
diagonal matrix of inverse eigenvalues of H, for instance:
\[
UA_{12}U^{T}=-\frac{1}{\eta}I+\Lambda^{-1}.
\]

Next we construct a permutation matrix $\Pi:2n\times2n$ with the
goal of converting $D$ into a block diagonal matrix with $2\times2$
blocks along the diagonal, where each block has the structure of $A$
as if it was applied to a 1D optimization problem, with $H$ being
replaced by one of the $d$ eigenvalues of $H$. This is achieved
with the permutation matrix $\Pi$ that is zero except for:
\[
\Pi_{2i,i}=1,\;\Pi_{2i+1,d+i}=1,\;i=1\dots n.
\]
 For instance, in the $n=2$ case the matrix is:
\[
\left[\begin{array}{cccc}
1 & 0 & 0 & 0\\
0 & 0 & 1 & 0\\
0 & 1 & 0 & 0\\
0 & 0 & 0 & 1
\end{array}\right].
\]
This matrix has the effect of interleaving the primal dual pairs (per
coordinate) instead of having all the primal coordinates together
followed by all the dual coordinates. So when we conjugate using $\Pi$
we get:
\[
\Pi^{T}\left[\begin{array}{cc}
U & 0\\
0 & U
\end{array}\right]^{T}A\left[\begin{array}{cc}
U & 0\\
0 & U
\end{array}\right]\Pi=\left[\begin{array}{ccc}
T_{1} & 0 & \dots\\
0 & \ddots & 0\\
\vdots & 0 & T_{d}
\end{array}\right],
\]
where each $T$ is a $2\times2$ matrix of the described form:
\[
T_{i}=\left[\begin{array}{cc}
-1 & \;-\frac{1}{\eta}+\lambda_{i}^{-1}\\
\eta & -\eta\lambda_{i}^{-1}
\end{array}\right].
\]
 The eigenvalues of a block diagonal matrix are just the eigenvalues
of the blocks concatenated, and since there is a similarity transform
between $A$ and this block diagonal matrix, we have effectively reduced
our problem to considering 1D quadratics, with curvature between $\mu$
and $L$, for fixed $\eta$. 

Recall that for a matrix $\left[\begin{array}{cc}
a & b\\
c & d
\end{array}\right]$ the eigenvalues are given by the two roots of a quadratic, namely:
\[
\nu_{\pm}=\frac{a+d\pm\sqrt{(a+d)^{2}-4(ad-bc)}}{2}.
\]
We use the notation $\nu$ to avoid confusion between the eigenvalues
of the $T$ blocks and those of H . For a block $T_{i}$, this expression
is
\[
\nu_{+}^{(i)}=-\frac{1}{2}-\frac{\eta\lambda_{i}^{-1}}{2}\pm\frac{1}{2}\sqrt{\left(1+\eta\lambda_{i}^{-1}\right)^{2}-4\left(\eta\lambda_{i}^{-1}+1-\eta\lambda_{i}^{-1}\right)}
\]
Suppose that the discriminate (the quantity under the square root)
is negative, then 
\[
Re\left[\nu_{\pm}^{(i)}\right]=-\frac{1}{2}-\frac{\eta\lambda_{i}^{-1}}{2},
\]
this is obviously at least as small as $-\rho=-\frac{\sqrt{\mu}}{\sqrt{\mu}+\sqrt{L}},$
since the largest value of $\rho$ possible is when $\mu=L$, in which
case $\rho=\frac{1}{2}$. So consider instead the case where the discriminate
is positive. We need only consider the $v_{+}$ root as it is strictly
larger. Then we will use the concavity of the square root function
to bound $\nu_{+}$, 
\[
h(x)\leq h(y)+\left\langle \nabla h(y),x-y\right\rangle 
\]
For $h=\sqrt{\cdot},$ with $y=\left(-1-\eta\lambda_{i}^{-1}\right)^{2}$
and $x=\left(-1-\eta\lambda_{i}^{-1}\right)^{2}-4$. We get:
\begin{align*}
\nu_{+}^{(i)} & \leq-\frac{1}{2}-\frac{\eta\lambda_{i}^{-1}}{2}+\frac{1}{2}\sqrt{\left(1+\eta\lambda_{i}^{-1}\right)^{2}}-\frac{4}{4\left(1+\eta\lambda_{i}^{-1}\right)}\\
 & =-\frac{1}{\left(1+\eta\lambda_{i}^{-1}\right)}.
\end{align*}
Therefore:
\[
\nu_{+}^{(i)}\leq-\frac{1}{\left(1+\eta\lambda_{i}^{-1}\right)}\leq-\frac{1}{\left(1+\eta/\mu\right)}=-\frac{\sqrt{\mu}}{\sqrt{\mu}+\sqrt{L}}=-\rho.
\]
Thus we have shown that the real parts of all eigenvalues of $A$
are less than $-\rho$.
\end{proof}
\begin{thm}
Consider the following linear ODE:
\[
\dot{u}=A\left(u-u^{*}\right),
\]
\[
A:2d\times2d=\left[\begin{array}{cc}
-\frac{\alpha}{\eta\tau}H & -\frac{1}{\eta}I+\frac{\alpha}{\eta\tau}I\\
\frac{1}{\tau}H & -\frac{1}{\tau}I
\end{array}\right],
\]

where $H:d\times d$ is a real, positive definite and symmetric matrix
with minimum eigenvalue $\mu$ and maximum eigenvalue $L$. This corresponds
to the AGM ODE for a quadratic objective$f(x)=\frac{1}{2}\left(x-x^{*}\right)^{T}H\left(x-x^{*}\right)$,
with $u=[x;g]$ and $u^{*}=[x^{*};0]$. The decay rate of this ODE
towards the origin $u^{*}=0$ can be bounded as follows for $\eta=\sqrt{\mu L}$,
$\tau=L/\eta$, and $\alpha\in[0,1]$:

\[
\left\Vert u(t)\right\Vert \leq\exp\left(-t\rho\right)\left\Vert u(0)\right\Vert ,
\]
\[
\rho=\frac{1}{2}\sqrt{\frac{\mu}{L}}.
\]
\end{thm}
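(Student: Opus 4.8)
The plan is to reuse the reduction in the proof of Theorem~\ref{thm:prox-ode-linear} almost verbatim and only redo the final eigenvalue estimate, now carrying the extra parameter $\alpha$. First I would write $H=U\Lambda U^{T}$ and conjugate $A$ by $\mathrm{diag}(U,U)$; every $d\times d$ block of $A$ is an affine combination of $I$ and $H$ (here the blocks carry $H$ itself rather than $H^{-1}$, but nothing changes), so all four blocks become diagonal. The same interleaving permutation $\Pi$ used before then turns $A$ into a block-diagonal matrix with $2\times2$ diagonal blocks
\[
T_{i}=\begin{pmatrix}-\frac{\alpha}{\eta\tau}\lambda_{i} & -\frac{1}{\eta}+\frac{\alpha}{\eta\tau}\\[3pt]\frac{1}{\tau}\lambda_{i} & -\frac{1}{\tau}\end{pmatrix},\qquad\lambda_{i}\in[\mu,L],
\]
so, exactly as in the quadratic case, it suffices (by Lemma~\ref{lem:ode-exp}) to show $\operatorname{Re}[\nu]\le-\rho$ for every eigenvalue $\nu$ of every $T_{i}$.

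The key observation is that, with $\eta=\sqrt{\mu L}$ and $\tau=L/\eta$ (so $\eta\tau=L$ and $1/\tau=\sqrt{\mu/L}$), the $\alpha$-dependent contributions to $\det T_{i}$ cancel: $\det T_{i}=\lambda_{i}/(\eta\tau)=\lambda_{i}/L$, while $\operatorname{tr}T_{i}=-\alpha\lambda_{i}/L-\sqrt{\mu/L}$. The eigenvalues are $\nu_{\pm}^{(i)}=\tfrac12\operatorname{tr}T_{i}\pm\tfrac12\sqrt{(\operatorname{tr}T_{i})^{2}-4\det T_{i}}$. I would then split on the sign of the discriminant. If it is nonpositive, $\operatorname{Re}[\nu_{\pm}^{(i)}]=\tfrac12\operatorname{tr}T_{i}\le-\tfrac12\sqrt{\mu/L}=-\rho$, and we are done. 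If it is positive, both roots are real and negative (since $\operatorname{tr}T_{i}<0<\det T_{i}$), so only $\nu_{+}^{(i)}$ needs control; writing $p(z)=z^{2}-z\operatorname{tr}T_{i}+\det T_{i}$ for the characteristic polynomial, and noting that $-\rho$ lies to the right of the vertex $\tfrac12\operatorname{tr}T_{i}$ (precisely the inequality just proved, which also gives $\nu_{-}^{(i)}<-\rho$ strictly), the bound $\nu_{+}^{(i)}\le-\rho$ reduces to $p(-\rho)\ge0$. A direct substitution gives $p(-\rho)=\tfrac{\lambda_{i}}{L}\bigl(1-\tfrac{\alpha}{2}\sqrt{\mu/L}\bigr)-\tfrac{\mu}{4L}$, which is positive since $\lambda_{i}\ge\mu$ and $1-\tfrac{\alpha}{2}\sqrt{\mu/L}\ge\tfrac12$ for $\alpha\in[0,1]$, $\mu\le L$. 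Lemma~\ref{lem:ode-exp} then yields the claimed rate.

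The main obstacle is the positive-discriminant case in the presence of the full range $\alpha\in[0,1]$: one must check that the worst case over both $\alpha$ and $\lambda_{i}\in[\mu,L]$ still clears $-\rho$, and the cleanest route is the ``evaluate the characteristic polynomial at $-\rho$'' trick above rather than the square-root-concavity bound on $\nu_{+}^{(i)}$ used for the proximal ODE (which would also work but becomes messier once $\alpha$ enters). A minor technical caveat is that Lemma~\ref{lem:ode-exp} presumes $A$ diagonalizable; this can fail only on the measure-zero parameter set where some $T_{i}$ has vanishing discriminant, and there the conclusion is recovered either by a continuity argument in $(\mu,L,\alpha)$ or by replacing $\|\exp(tA)\|$ with the standard Jordan-block estimate $C(1+t^{m})e^{-t\rho}$, which leaves the exponential rate unchanged.
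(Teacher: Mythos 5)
Your proof is correct and follows the paper's reduction exactly: conjugation by $\mathrm{diag}(U,U)$ plus the interleaving permutation yields the same $2\times2$ blocks $T_{i}$, and Lemma \ref{lem:ode-exp} reduces everything to bounding $\operatorname{Re}[\nu_{\pm}^{(i)}]$. Where you diverge is the endgame: the paper shows that with $\eta=\sqrt{\mu L}$, $\tau=L/\eta$, $\alpha\in[0,1]$ the discriminant $\bigl(\tfrac{\alpha}{L}\lambda_{i}+\sqrt{\mu/L}\bigr)^{2}-4\tfrac{\lambda_{i}}{L}$ is \emph{always} nonpositive (via $(a+b)^{2}\le2a^{2}+2b^{2}$ and $\alpha^{2}\lambda_{i}^{2}/L^{2}\le\lambda_{i}/L$), so the real part is simply $\tfrac12\operatorname{tr}T_{i}=-\tfrac{\alpha\lambda_{i}}{2L}-\tfrac12\sqrt{\mu/L}\le-\rho$ and no second case arises; your positive-discriminant branch is therefore vacuous under these parameter settings, but your handling of it — both roots real and negative, vertex to the left of $-\rho$, and $p(-\rho)=\tfrac{\lambda_{i}}{L}\bigl(1-\tfrac{\alpha}{2}\sqrt{\mu/L}\bigr)-\tfrac{\mu}{4L}>0$ — is computed correctly, so your argument is valid and in fact slightly more robust (it would survive parameter choices for which the discriminant can change sign). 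Your remark about the diagonalizability hypothesis in Lemma \ref{lem:ode-exp}, handled by continuity or a Jordan-block estimate, addresses a point the paper silently glosses over.
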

\begin{proof}
We can apply the same proof technique as for Theorem \ref{thm:prox-ode-linear},
we omit the details. The $2\times2$ block diagonal matrices are:
\[
T_{i}=\left[\begin{array}{cc}
-\frac{\alpha}{\eta\tau}\lambda_{i} & -\frac{1}{\eta}+\frac{\alpha}{\eta\tau}\\
\frac{1}{\tau}\lambda_{i} & -\frac{1}{\tau}
\end{array}\right].
\]

For eigenvalues $\lambda_{i}$ of $H$. The two eigenvalues of $T_{i}$
are:
\[
\nu_{\pm}^{(i)}=-\frac{\alpha}{2\eta\tau}\lambda_{i}-\frac{1}{2\tau}\pm\frac{1}{2}\sqrt{\left(\frac{\alpha}{\eta\tau}\lambda_{i}+\frac{1}{\tau}\right)^{2}-4\frac{\lambda_{i}}{\eta\tau}}
\]
The discriminate here is always non-positive. To see why, we plug
in the constants $\eta$ and $\tau$:
\begin{align*}
\left(\frac{\alpha}{L}\lambda_{i}+\sqrt{\frac{\mu}{L}}\right)^{2}-4\frac{\lambda_{i}}{L} & \leq2\frac{\alpha^{2}}{L^{2}}\lambda_{i}^{2}+2\frac{\mu}{L}-4\frac{\lambda_{i}}{L}\\
 & \leq2\frac{\lambda_{i}}{L}+2\frac{\mu}{L}-4\frac{\lambda_{i}}{L}\\
 & =2\frac{\mu}{L}-2\frac{\lambda_{i}}{L}\\
 & \leq0.
\end{align*}
So the real part is given by the quantity outside the square root,
namely:
\[
Re\left[\nu_{\pm}^{(i)}\right]=-\frac{\alpha}{2\eta\tau}\lambda_{i}-\frac{1}{2\tau}=-\frac{\alpha\lambda_{i}}{2L}-\frac{1}{2}\sqrt{\frac{\mu}{L}}\leq-\frac{1}{2}\sqrt{\frac{\mu}{L}}.
\]
Using Lemma \ref{lem:ode-exp} gives the result.
\end{proof}

\section{The standard heavy ball ODE}

The standard heavy-ball ODE for a quadratic $f(x)=\frac{1}{2}x^{T}Hx$
is
\[
\ddot{x}+\left(1-\beta\right)\dot{x}+\gamma Hx=0.
\]
Which can be written in first-order form in terms of a momentum parameter
$p$ as:
\begin{align}
\dot{x} & =p,\nonumber \\
\dot{p} & =-\left(1-\beta\right)p-\gamma Hx.\label{eq:heavy-ball-ode}
\end{align}
The constants that result in optimal convergence rate for the discretize
heavy ball method:
\begin{align*}
x^{k+1} & =x^{k}+p^{k}\\
p^{k} & =\beta p^{k-1}-\gamma\nabla f(x^{k})
\end{align*}
which can also be written as:
\[
x^{k+1}=x^{k}-\gamma Hx^{k}+\beta\left(x^{k}-x^{k-1}\right),
\]
are:
\begin{equation}
\beta=\frac{\sqrt{L}-\sqrt{\mu}}{\sqrt{L}+\sqrt{\mu}},\quad\gamma=\frac{4}{\left(\sqrt{L}+\sqrt{\mu}\right)^{2}}.\label{eq:heavy-ball-consts}
\end{equation}

\begin{thm}
Consider the following ODE:
\[
\dot{u}=A\left(u-u^{*}\right),
\]
\[
A=\left[\begin{array}{cc}
0 & I\\
-\gamma H & -\left(1-\beta\right)I
\end{array}\right].
\]
This is the ODE in Equation \ref{eq:heavy-ball-ode} written in matrix
form for a combined iterate $u$. For the parameters given in Equation
\ref{eq:heavy-ball-consts}, this ODE has decay rate at least:
\[
\left\Vert u(t)-u^{*}\right\Vert \leq\exp\left(-t\rho\right)\left\Vert u(0)-u^{*}\right\Vert ,
\]
\[
\text{where\, }\rho=\frac{\sqrt{\mu}}{\sqrt{L}+\sqrt{\mu}}.
\]
\end{thm}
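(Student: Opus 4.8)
The plan is to follow the same route as the proof of Theorem~\ref{thm:prox-ode-linear}: block-diagonalize $A$ using the spectral decomposition of $H$, read off the eigenvalues of the resulting $2\times2$ blocks, bound their real parts, and then invoke Lemma~\ref{lem:ode-exp}.

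First I would write $H=U\Lambda U^{T}$ and conjugate $A$ by $\text{diag}(U,U)$. Every block of $A$ is either $0$, a scalar multiple of $I$, or $-\gamma H$, so conjugation fixes the identity blocks and sends $-\gamma H$ to $-\gamma\Lambda$; all four blocks become diagonal. Conjugating further by the interleaving permutation $\Pi$ from the proof of Theorem~\ref{thm:prox-ode-linear} turns this into a block-diagonal matrix with $2\times2$ diagonal blocks
\[
T_{i}=\left[\begin{array}{cc} 0 & 1 \\ -\gamma\lambda_{i} & -(1-\beta) \end{array}\right],
\]
one for each eigenvalue $\lambda_{i}\in[\mu,L]$ of $H$; the spectrum of $A$ is the union over $i$ of the spectra of the $T_{i}$.

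Next I would compute the eigenvalues of each block,
\[
\nu_{\pm}^{(i)}=\frac{-(1-\beta)\pm\sqrt{(1-\beta)^{2}-4\gamma\lambda_{i}}}{2}.
\]
Substituting $\beta=\frac{\sqrt{L}-\sqrt{\mu}}{\sqrt{L}+\sqrt{\mu}}$ gives $1-\beta=\frac{2\sqrt{\mu}}{\sqrt{L}+\sqrt{\mu}}$, and with $\gamma=\frac{4}{(\sqrt{L}+\sqrt{\mu})^{2}}$ the discriminant equals $\frac{4(\mu-4\lambda_{i})}{(\sqrt{L}+\sqrt{\mu})^{2}}<0$ since $\lambda_{i}\geq\mu$. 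Hence each $T_{i}$ has a conjugate pair of non-real eigenvalues — in particular $A$ is diagonalizable, as Lemma~\ref{lem:ode-exp} requires — and their common real part is $Re[\nu_{\pm}^{(i)}]=-\tfrac12(1-\beta)=-\frac{\sqrt{\mu}}{\sqrt{L}+\sqrt{\mu}}=-\rho$. Since this bound is uniform in $i$, Lemma~\ref{lem:ode-exp} delivers $\|u(t)-u^{*}\|\leq\exp(-t\rho)\|u(0)-u^{*}\|$.

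I expect no serious obstacle here; the argument is in fact cleaner than in Theorem~\ref{thm:prox-ode-linear}, where the discriminant could be positive and a concavity-of-square-root estimate was needed, whereas here it is negative over the entire spectrum $[\mu,L]$. The only points worth verifying carefully are that the block structure survives conjugation by $\text{diag}(U,U)$ — which it does because each block is a polynomial in $H$ — and that the discriminant is non-positive for all $\lambda_{i}\in[\mu,L]$ rather than merely at the endpoints; both checks are immediate. It is worth noting that the decay rate is governed entirely by $\beta$ through the damping term $1-\beta$, with $\gamma$ playing only the auxiliary role of keeping every discriminant negative so that the oscillatory (imaginary) part absorbs the curvature dependence.
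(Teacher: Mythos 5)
Your proposal is correct and follows essentially the same route as the paper's own proof: block-diagonalization into the $2\times2$ blocks $T_i$, the observation that the choice of $\gamma$ makes every discriminant non-positive, and the conclusion via Lemma~\ref{lem:ode-exp} that the decay rate is $-\tfrac12(1-\beta)=-\rho$. Your explicit check that the non-real conjugate eigenvalue pairs make $A$ diagonalizable (as the lemma requires) is a small verification the paper leaves implicit, but it does not change the argument.
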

\begin{proof}
We can reduce the problem to considering the eigenvalues of $2\times2$
matrices as we did for the proximal and AGM ODEs. We have matrices
of the form:
\[
T_{i}=\left[\begin{array}{cc}
0 & 1\\
-\gamma\lambda_{i} & -\left(1-\beta\right)
\end{array}\right],
\]
whose eigenvalues are given by the general formula
\[
\nu_{\pm}=\frac{a+d\pm\sqrt{(a+d)^{2}-4(ad-bc)}}{2}.
\]
Simplifying:
\[
\nu_{\pm}=-\frac{1}{2}\left(1-\beta\right)\pm\frac{1}{2}\sqrt{\left(1-\beta\right){}^{2}-4\gamma\lambda_{i}}.
\]
Note that:
\[
1-\beta=2\frac{\sqrt{\mu}}{\sqrt{L}+\sqrt{\mu}}.
\]
The choice of step size ensures that the discriminant is always non-positive:
\[
\left(\frac{2\sqrt{\mu}}{\sqrt{L}+\sqrt{\mu}}\right)^{2}-4\frac{4\lambda_{i}}{\left(\sqrt{L}+\sqrt{\mu}\right)^{2}}\leq\left(\frac{2\sqrt{\mu}}{\sqrt{L}+\sqrt{\mu}}\right)^{2}-4\left(\frac{2\sqrt{\mu}}{\sqrt{L}+\sqrt{\mu}}\right)^{2}\leq0
\]
Therefore the decay rate is bounded by the real part of the eigenvalues,
which is $-\frac{1}{2}(1-\beta).$
\end{proof}

\end{document}